\def\eqref#1{equation~\ref{#1}}
\def\1{\bm{1}}
\def\eps{{\epsilon}}
\DeclareMathAlphabet{\mathsfit}{\encodingdefault}{\sfdefault}{m}{sl}
\SetMathAlphabet{\mathsfit}{bold}{\encodingdefault}{\sfdefault}{bx}{n}
\newcommand{\E}{\mathbb{E}}
\newcommand{\R}{\mathbb{R}}
\DeclareMathOperator*{\argmax}{arg\,max}
\newtheorem{theorem}{Theorem}[section]
\theoremstyle{definition}
\newtheorem{definition}{Definition}[section]
\theoremstyle{remark}
\newcommand{\rev}[1]{#1}   %
\newcommand{\privdata}[1]{{\color{blue}#1}}
\newcommand{\leaktext}[1]{\colorbox{red!10}{\textcolor{red}{$\blacktriangleright$\it [#1]}}}
\newcommand{\best}[1]{\mathbf{#1}}
\newcommand{\secbest}[1]{\underline{\smash{#1}}}
\definecolor{Gray}{gray}{0.93}
\newcommand{\kbar}{\bar k}
\newcommand{\cX}{{\mathcal{X}}}
\newcommand{\cY}{{\mathcal{Y}}}
\newcommand{\bc}{\begin{center}}
\newcommand{\ec}{\end{center}}
\newcommand{\bdm}{\begin{displaymath}}
\newcommand{\edm}{\end{displaymath}}
\newcommand{\beq}{\begin{equation}}
\newcommand{\eeq}{\end{equation}}
\newcommand{\bfl}{\begin{flushleft}}
\newcommand{\efl}{\end{flushleft}}
\newcommand{\bt}{\begin{tabbing}}
\newcommand{\et}{\end{tabbing}}
\newcommand{\beqn}{\begin{align}}
\newcommand{\eeqn}{\end{align}}
\newcommand{\beqs}{\begin{align*}} %
\newcommand{\eeqs}{\end{align*}}  %
\newcommand{\etal}{\emph{et al.}}
\newcommand{\Ebb}{\mathbb{E}}
\title{DP-OPT: Make Large Language Model Your Privacy-Preserving Prompt Engineer}
\newcommand{\utexas}{\textsuperscript{1}}
\newcommand{\princeton}{\textsuperscript{2}}
\newcommand{\mitn}{\textsuperscript{3}}
\newcommand{\chicago}{\textsuperscript{4}}
\author{Junyuan Hong\utexas, Jiachen T. Wang\princeton, Chenhui Zhang\mitn, Zhangheng Li\utexas, Bo Li\chicago, Zhangyang Wang\utexas
\\ 
       \utexas University of Texas at Austin, \princeton Princeton University, \mitn MIT, \chicago University of Chicago \\
       \texttt{\{jyhong,zoharli,atlaswang\}@utexas.edu}, \texttt{tianhaowang@princeton.edu} \\
       \texttt{chenhui5@mit.edu}, \texttt{bol@uchicago.edu}
       }
\begin{document}

\maketitle

\begin{abstract}
Large Language Models (LLMs) have emerged as dominant tools for various tasks, particularly when tailored for a specific target by prompt tuning. Nevertheless, concerns surrounding data privacy present obstacles due to the tuned prompts' dependency on sensitive private information. A practical solution is to host a local LLM and optimize a soft prompt privately using data. Yet, hosting a local model becomes problematic when model ownership is protected. Alternative methods, like sending data to the model's provider for training, intensify these privacy issues facing an untrusted provider. In this paper, we present a novel solution called \textit{Differentially-Private Offsite Prompt Tuning} (\textbf{DP-OPT}) to address this challenge. Our approach involves tuning a discrete prompt on the client side and then applying it to the desired cloud models. We demonstrate that prompts suggested by LLMs themselves can be transferred without compromising performance significantly. To ensure that the prompts do not leak private information, we introduce the first private prompt generation mechanism, by a differentially-private (DP) ensemble of in-context learning with private demonstrations.  With DP-OPT, generating privacy-preserving prompts by Vicuna-7b can yield competitive performance compared to non-private in-context learning on GPT3.5 or local private prompt tuning.
Codes are available at \url{https://github.com/VITA-Group/DP-OPT}.
\end{abstract}

\section{Introduction}

When Large Language Models gain vast knowledge and versatile ability from large-scale pre-training, prompt engineering has surfaced as the most effective, cost-efficient, and adaptable method to tailor LLMs for a range of downstream applications.
In contrast to the resource-heavy optimization of model parameters, prompt engineering merely necessitates API access and iteratively refines prompts based on the validation of training instances.
Though manual prompt engineering has achieved impressive performance in various tasks~\citep{petroni2019language,zhou2022large}, it often requires decent human experience in prompt designing and domain knowledge for downstream tasks, including legal judgement~\citep{trautmann2022legal}, healthcare~\citep{wang2023prompt} and art~\citep{oppenlaender2023prompting}.
To mitigate the high costs, data-driven prompt tuning was proposed to automate the process. 
The most prominent example of this is soft prompt tuning, where prompts are characterized as trainable embedding vectors and are refined using a collection of training instances~\citep{houlsby2019parameter,roberts2019exploring,brown2020language,chen2022adaprompt}.
However, one major barrier to the applications of prompt tuning is data privacy.
When searching for a validate prompt for an LLM API, such as ChatGPT, there is a need to upload a multitude of training samples for evaluation queries.
In privacy-sensitive scenarios, the operation could be prohibited due to two concerns.
1)~\emph{Data Confidentiality}. Certain data, like medical histories, proprietary system logs, and personal messages, are inherently confidential and should not be transmitted beyond local devices, internal computing systems, and mobile phones.
2)~\emph{Information Leakage}. Resources derived from private data might inadvertently contain personally identifiable information.
For instance, personal identifiers like names, residential addresses, and contact numbers present in the fine-tuning data~\citep{lukas2023analyzing} or during the pre-training phase~\citep{carlini2021extracting} might be retrievable from the adjusted parameters.
Even with a limited parameter set, such as prompts, the potential for data breaches remains significant~\citep{duan2023flocks}.

\begin{figure}
    \centering
    \setlength\intextsep{0pt}
    \setlength\abovecaptionskip{0pt}
    \includegraphics[width=0.96\textwidth]{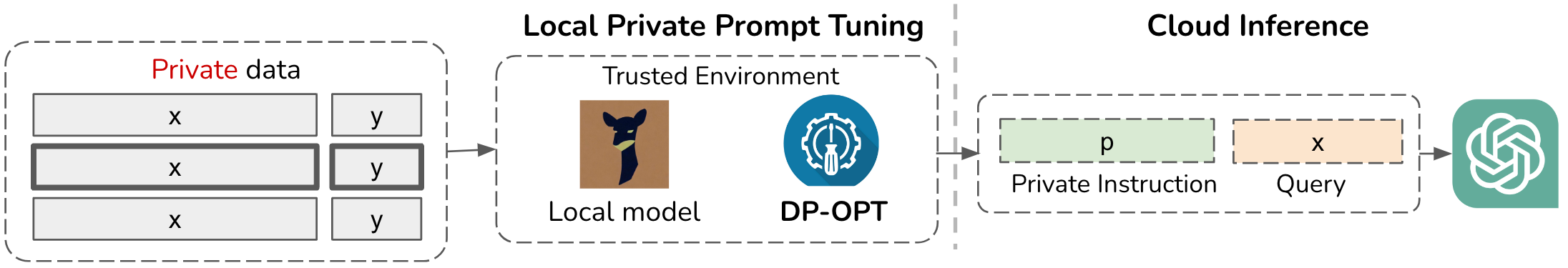}
    \caption{Differentially-Private Offsite Prompt Tuning (DP-OPT) works as an intermediate layer between local data and cloud models. Leveraging a local model, DP-OPT can fine-tune a differentially-private prompt that can transfer to the target model.
    }
    \label{fig:teaser}
    \vspace{-11pt}
\end{figure}

A straightforward approach would be to manage the entire prompt process on the local device and offer services via an API. However, this becomes impractical when there's a preference for a sophisticated closed-source model, not to mention the substantial costs involved in hosting and overseeing an LLM locally.
For example, there is a high demand for serving prompts with the most powerful LLMs, e.g., GPT-3.5, to leverage the state-of-the-art generation ability.
Yet, the specifics and structure of GPT-3.5 remain proprietary and undisclosed for protecting Intelligent Property (IP).
Even if GPT proprietors are willing to support local prompt tuning by dispatching compressed models \citep{xiao2023offsite}, they could be confronted with the potential peril of losing their model's ownership.

In this paper, we propose Differentially-Private Offsite Prompt Tuning (DP-OPT) to make LLM engineer private and transferable prompts for cloud-hosted LLMs, which is illustrated in \cref{fig:teaser}. 
The crux of privacy protection is that DP-OPT operates exclusively on the client.
Given a confidential training dataset, DP-OPT uses a few samples as demonstrations to guide a local LLM to generate prompts.
The local assistant LLM may be significantly smaller than the intended cloud-based LLMs.
Such a prompt generation process is facilitated by a Differentially-Private (DP) ensemble of in-context learning with disjoint private demonstration subsets.
Our contributions are summarized as follows:
\begin{itemize}[leftmargin=0.3in]
    \item We proposed the first end-to-end framework where DP-OPT operates on client devices and data and yields a front-end prompt for inference on the privacy-untrusted cloud. Our framework is the first solution that simultaneously protects (i) data confidentiality by keeping data local; (ii) information privacy by the DP noise mechanism; and (iii) cloud model ownership and IP by eliminating the parameter exposure of cloud models from local training.
    \item We first show that discrete prompts automatically tuned by LLMs are transferable across models with favorable performance on cloud models. 
    The finding motivates the offsite prompt tuning (OPT) framework with local prompt tuning and cloud inference.
    \item We then provide the first Differentially-Private mechanism for generating private prompts without gradients or public data. The privacy costs can be tightly bounded during prompt tuning.
    \item Empirically, our method presents an outstanding performance on multiple language tasks. Prompts tuned on open-source Vicuna-7b~\citep{vicuna2023} can achieve significant performance gains across 4 tasks after transfer to closed-source heterogeneous-architecture models (GPT3.5) or open-source models (Llama-2~\citep{llama2} or Vicuna-33b).
\end{itemize}

\section{Related Work}

\textbf{Discrete Prompt Tuning.}
On the rise of pre-trained generative language models, discrete prompt tuning~\citep{shin2020autoprompt} was introduced to amplify the inference capability of LLMs through demonstrations~\citep{brown2020language} or informative instructions~\citep{prasad2022grips,shin2020autoprompt}.
The method is orthogonal to the soft prompt tuning~\citep{lester2021power,liu2021makes} that optimizes continuous prepended embeddings instead of discrete tokens and is therefore not favored for API-only models.
One line of the discrete prompt tuning aims to improve the search efficiency by gradient-free phrase editing~\citep{prasad2022grips}, reinforcement learning~\citep{deng2022rlprompt}, embedding optimization~\citep{shi2022toward}, and projected-gradient optimization~\citep{wen2023hard}.
Another line of work does not rely on search or optimization algorithms and was initiated by Automatic Prompt Engineering (APE)~\citep{zhou2022large} that employed LLM to prompt themselves.
Later, Deep Language Network (DLN)~\citep{sordoni2023deep} improves APE by backward updates and stacked language models. 
Though these methods achieve great progress in discrete prompt tuning approaching the soft prompt results, the privacy risks or protection of the generated prompts have not been studied yet.
In our work, we first highlight the advantage of DLN prompts in transfer learning.
Essentially different from the negative transfer effect presented in recent work~\citep{wen2023hard}, we show that DLN prompts can transfer to and work better on larger models than on the source models, namely positive transfer.
Meanwhile, we provide the first solution to ensure the privacy of the gradient-free algorithms that demonstrate strong empirical performance compared to in-context learning and previous private gradient-based competitors.

\textbf{Privacy Risks in Prompt Tuning.}
LLMs have been shown to possess the ability to memorize data, not just from extensive pre-training datasets~\citep{carlini2021extracting,carlini2022quantifying,wang2023decodingtrust}, but also from more concise private prompt-tuning datasets~\citep{duan2023flocks}. 
Consequently, it becomes imperative to shield private data from potential exposure in released prompts.
To illustrate this vulnerability, \cite{duan2023flocks} employed a membership-inference attack~\citep{shokri2017membership,carlini2022membership} to probe the susceptibilities of tuned soft prompts, revealing a significant success rate for the attacks.
Though the risk of soft prompt tuning has been highlighted, it is unclear how the discrete prompts exemplified by DLN will leak private information.
Our research uncovers direct data leakage via prompts crafted by DLN, underscoring the need for novel discrete defense mechanisms.

\textbf{Mitigating Privacy Leakage in Prompt Tuning.}
As a golden standard for bounding privacy risks, there is increasing interest to incorporate differential privacy (DP)~\citep{dwork2006dp} into prompt tuning for privacy protection.
Closely related to our work, PromptPATE utilized a DP ensemble approach to label public data. Using these as in-context examples, they devised a discrete prompt tailored for few-shot learning on designated models~\citep{duan2023flocks}.
However, the work assumes a set of non-private data which may not hold in practice.
In the absence of public datasets, \cite{duan2023flocks} also showed the viability of DP-SGD~\citep{abadi2016deep} in the realm of soft prompt tuning.
In parallel, DP In-Context Learning (ICL)~\citep{wu2023privacy} advocated for ensembling multiple in-context samples to predict classification labels directly, which limits the query times by the number of available private samples.
Instead of prediction, \cite{tang2023privacy} proposed ensemble generation by prompting LLMs with examples.
The generated samples can be used for ICL and enable infinitely many queries.
Yet, these approaches were used with the full exposure of private samples to the target model, rendering it infeasible when a model vendor is untrustworthy.
From a technical standpoint, our strategy differs from these methods on ensembling outputs to produce instructional prompts. 
Specifically, we ensemble in-context examples to produce instructional prompts rather than labels.  %
This design facilitates the effortless migration of trained prompts to many models, eliminating the need for extra training on cloud models.

\rev{
Another focal point among the community is the sanitization of texts \citep{feyisetan2020privacy, xu2020differentially, carvalho2023tem, du2023sanitizing, utpala2023locally}. 
These works introduce randomness at the word level and are able to achieve the privacy guarantee in terms of local differential privacy (LDP) \citep{kasiviswanathan2011can} or a similar definition called \emph{metric differential privacy.} 
Recent advancements in this field \citep{mattern2022limits, utpala2023locally} combine the idea of perturbation with paraphrasing based on fine-tuning or zero-shot prompts.}

\textbf{Prompt Ensemble.}
In our work, we use a prompt ensemble together with a noise mechanism for privatization.
Previously, ensembling multiple prompts has been explored for improving inference quality with large language models.
\cite{pitis2023boosted} proposed to boost the in-context learning by ensembling multiple few-shot prompts, which is effective on classification tasks.
Similarly, \cite{hou2023promptboosting} query an LLM by samples enveloped in diverse prompt templates, exhibiting state-of-the-art classification performance.
Yet the line of works focuses on few-token prediction instead of long-context generation.
As a pioneering effort, the ensemble technique was harnessed for sequence-to-sequence linguistic generation, leveraging long short-term memory networks equipped with attention mechanisms~\citep{juraska2018deep}.
Subsequently, this ensemble paradigm found its application in large language transformers.
\cite{hewitt2021ensembles} orchestrated an ensemble of token logits sourced from a lightweight-finetuned language model and a fully-fintuned one to generate texts, thereby bolstering the robustness of the generated content.
Different from \citep{hewitt2021ensembles}, our method leans on a voting-based ensemble of prompts from multiple prompts, which facilitates the application of differential privacy.
Novel to this work, we show that the ensemble of hundreds of prompts can be effective in generating a long context with DP noise.
\section{Preliminaries}
\vspace{-0.1in}

\textbf{Large Language Models (LLMs) and Prompt Tuning.}
LLMs like GPT\citep{radford2018improving,openai2023gpt}, Llama~\citep{llama2}, and OPT~\citep{zhang2022opt} are pre-trained to generate tokens conditioned on previous context.
Generally, the language generation can be represented as a conditional probability $p_{\text{LM}}^t(y | x)$ where $x$ is a prompt and $y$ is the corresponding output.
The temperature $t\ge 0$ can be increased to generate more diverse responses.
We use $\pi$ to represent a front-end prompt, e.g., a task instruction that guides the LLM to think and conclude a response.
\emph{Prompt tuning} optimizes a prompt $\pi$ that can be wrapped with the input query $x$ in a template $F(\pi, x)$ and improves the response quality of LLM $p_{\text{LM}}^t(y | F(\pi, x))$, e.g., the accuracy on text classification.

\newcommand{\M}{\mathcal{M}}
\newcommand{\MS}{ \mathbb{N}^\cX }

\textbf{Differential Privacy}~\citep{dwork2006calibrating} stands as the gold standard for assessing the privacy guarantee of machine learning algorithms. DP has gained significant attention among the privacy community as a robust, quantifiable privacy notion, thereby becoming the de-facto choice in privacy protection. 
Formally, we use $D, D' \in \MS$ to denote two datasets with an unspecified size over space $\cX$. We call two datasets $D$ and $D'$ \emph{adjacent} (denoted as $D \sim D'$) if we can construct one by adding/removing one data point from the other, e.g., $D = D' \cup \{z\}$ for some $z \in \cX$. 

\begin{definition}[Differential Privacy \citep{dwork2006calibrating}] 
\label{def:DP}
For $\eps, \delta \ge 0$, a mechanism $\M : \MS \rightarrow \cY$ is 
{\em $(\eps, \delta)$-differentially private} if for every pair of adjacent datasets $D \sim D'$ and for every subset of possible outputs $E \subseteq \cY$, we have
$
\Pr_{\M}[\M(D) \in E] \le e^\eps \Pr_{\M}[\M(D') \in E] + \delta 
$
here the randomness is over the coin flips of $\M$. 
\end{definition}

The above definition indicates that for an arbitrary pair of neighboring datasets, a DP algorithm should yield statistically indistinguishable output distribution, preventing adversaries from distinguishing between the outcomes from the datasets. 
In our study, the mechanism $\M$ being considered is the prompt generation algorithm.

\vspace{-0.1in}
\section{Method}
\vspace{-0.1in}

\textbf{Assumptions.}
Due to the convenience and high performance of cloud models, it is a common interest for a client to tune a prompt that can be served on the cloud.
We assume that a client has a set of data $D$ that will be used for prompt tuning but has strict constraints on the data usage as follows.
\emph{1) Data Confidentiality.} The client data cannot be shared with the cloud-model vendor.
\emph{2) Information Privacy.} The tuned prompt should not leak private information about the client data, including but not limited to enclosing private contents, and inferrable private information.
\emph{3) Model Ownership.} On the cloud, model ownership could be a concern and therefore parameters should not be shared with the client.

\rev{
\textbf{Threat Model.}
We assume an adversary on the cloud-model vendor side which aims to gain private information (e.g., membership information) from the private dataset stored in the client device.
The adversary can only get a tuned prompt provided by the client but can leverage any available LLMs for attacking.
The real-world consequence of privacy leakage through released prompts could result in violation of privacy regulation, e.g., \cite{gdpr}.
Concretely, private identifiable information (e.g., names) could be exposed in prompts.
Empirically, the privacy risks have been identified in existing works using viable attacks~\citep{wang2023decodingtrust,duan2023privacy}.
Especially, \cite{liu2023entire} shows that private instructions behind Bing can be extracted merely by adversarial prompts.
}

\textbf{Main Idea.}
To preserve the data confidentiality and privacy, we propose Differentially-Private Offsite Prompt Tuning (DP-OPT) which isolates the prompt tuning and data from the cloud model.
The general idea of DP-OPT includes two steps: 
\emph{1) Private Prompt Engineering}: Engineer a private prompt $\pi$ by fully localized model and datasets, i.e., $\pi\sim \operatorname{DP-OPT}(D, p_{\text{LM}}^t(\cdot))$; 
\emph{2) Prompt Transfer}: Deploy prompts on cloud model for public inference, i.e., $y \leftarrow p_{\text{cloud-LM}}^t(y | F(x, \pi))$, where $F()$ is a forward template.

To achieve the goal, the two major technical challenges are: (1) How to engineer a model-transferable prompt? (2)~How to guarantee that the prompts do not leak private information?
We will answer the two questions sequentially in the following two sections.

\subsection{Transferable Discrete Prompts Enable Offsite Prompt Tuning}
\vspace{-0.1in}

To make a prompt transferable across models, it is necessary to have a discrete prompt that is not bonded with any model-specific embeddings or tokenization strategies.
Importantly, recent advances show that discrete prompts are naturally transferable (to some extent) across domains. 
\cite{wen2023hard} demonstrated that the projected soft prompts tuned by their method, \emph{PEZ}, on GPT-2 (755M) can be used on larger GPT-2 variants (1.3B) or different architectures, like OPT (6.7B).
However, such a transfer was shown to suffer from a significant loss of performance.
As reported in their paper, the prompt tuned on GPT-2 (755M) would lose $10.9\%$ absolute accuracy on SST-2 if transferred to OPT (2.7B) and $15.7\%$ to OPT (6.7B).
We extend the same experiment to training on Vicuna-7b and testing on DaVinci-003.
Similarly, we observe a $6.9\%$ loss of accuracy upon transfer. 
The situation could worsen on smaller datasets according to our extended experiments in \cref{app:ex_exp}.

As discussed in \citep{wen2023hard}, the key reason for the poor transferability of projected prompt tuning is the incoherence of the tuned prompts.
For example, the method for SST-2 with the fluency constraint produced ``\emph{negative vibeThis immatureollywood MandarinollywoodThis energetic screenplay.}''.
This means that the method does not generate a semantically transferable but might still heavily hinge on the embedding space to prompt the training model.

Observing the limitation of projected prompt tuning, we intend to find a semantically transferable prompt.
To avoid the pitfalls of the embedding space, we look for a method that does not backward the signal through the embeddings but produces a fluent and coherent prompt.
Inspired by the automatic prompt engineering (APE)~\citep{zhou2022large,sordoni2023deep}, we conjecture that the LLM itself is an ideal tool for this purpose.
Given a well-trained LLM, APE uses samples as context and prompts the LLM to generate a task instruction, which is naturally fluent, coherent, and perhaps transferable.
As the task instruction is not optimized in the embedding space but in the output space, it barely relies on hidden neural connections for enhancing inference accuracy.
Instead, it captures the explicit and generalizable task semantics that may be reused and strengthened by stronger LLMs.
Therefore, we hypothesize that \emph{discrete prompts crafted by one LLM may transfer to another with target-model-dependent performance on the same task}.

\begin{wraptable}{r}{7.8cm}
    \vspace{-0.1in}
    \centering
    \small
    \setlength{\tabcolsep}{2pt}
    \begin{tabular}{c|c|*{2}{c@{ }c}}
    \toprule
             &   \textbf{Source} &  \multicolumn{4}{c}{\textbf{Target}}  \\
      Task &   Vicuna-7b & Llama-2-70b & $\Delta$ & DaVinci-003 & $\Delta$  \\
    \midrule
    SST-2    &  $92.8 (0.2)$ &  $93.3 (1.8)$ & $\mathbf{ 0.5}$ & $92.7 (0.3)$ & $-0.1$ \\
    Trec     &  $59.9 (5.7)$ &  $65.2 (7.5)$ & $\mathbf{ 5.3}$ & $70.7 (3.9)$ & $\mathbf{11.2}$ \\
    Mpqa     &  $75.8 (6.2)$ &  $78.0 (2.3)$ & $\mathbf{ 2.2}$ & $81.4 (1.6)$ & $\mathbf{ 5.6}$ \\
    Disaster &  $61.7 (3.2)$ &  $73.1 (1.6)$ & $\mathbf{11.4}$ & $77.0 (1.9)$ & $\mathbf{15.3}$ \\
    \bottomrule
    \end{tabular}
    \caption{DLN-1 can produce transferable prompts, bringing non-trivial gains ($\Delta$).
    Accuracy (\%) on test sets is reported with standard deviation in the brackets. 
    }
    \label{tab:dln1_transfer}
    \vspace{-0.1in}
\end{wraptable}

\textbf{Make LLM Prompt Engineer.} To gain the best performance, we consider the state-of-the-art APE method, Deep Language Network (DLN) \citep{sordoni2023deep}, that mimics gradient-based optimization to use forward and backward to train prompts on a dataset $D=\{(x,y)\}$ with input-output pairs $(x,y)$.
\emph{1) Prompt Generation.} In the forward pass, an LLM is prompted via a forward template $F(x,\pi)$ to predict labels on a small batch of training samples $S \leftarrow \{(x, y) \sim D\}$, i.e., $\hat y\sim p^t_{\text{LM}} (y | F(x,\pi))$.
Then in the backward pass, the correct and incorrect predictions will be used as in-context examples for LLM to generate a task instruction $\pi$.
Formally, $\pi$ is sampled from $p^t_{\text{LM}} (\pi | B_\pi(\{(x,y, \hat y)\}, \pi))$ where $B_\pi$ is a backward template.
\emph{2) Prompt Selection.} With a set of candidate prompts, DLN-1 yields the best prompt with the highest log probability on the training set.

\textbf{LLM-Engineered Prompts Are Transferrable.} 
To verify our hypothesis that the prompts generated by DLN-1 are transferable across models and gain better accuracy, we let DLN-1 train prompts using a relatively small LLM, Vicuna-7b~\citep{vicuna2023}.
Then, the generated prompt is then applied to a larger homogeneous-architecture model, Llama-2-70b, and a heterogeneous-architecture and closed-source model, DaVinc-003.
Experiments are carried out on four sentiment classification tasks.
In \cref{tab:dln1_transfer}, DLN-1 demonstrates a competitive performance on the target model.
Different from traditional observation, DLN-1 even attains $8\%$ accuracy gains after transfer to DaVinci-003 on average and $4.9\%$ to Llama-2-70b.
Not surprisingly, the transferrable prompts generated by DLN-1 are also coherent and fluent as exemplified in \cref{fig:dln1-leakage}.

\vspace*{-0.2in}
\subsection{Differentially-Private Offsite Prompt Tuning (DP-OPT)}
\vspace{-0.1in}

Though leveraging DLN-1 can keep data confidential against the cloud model by transferring prompts, it does not provide any provable guarantee for privacy protection.
Notice that DLN-1 feeds private samples to LLM to generate instruction prompts, which may leak private information.
To unveil the risk, we present an example prompt from DLN-1 in \cref{fig:dln1-leakage}, where three private examples are verbatim copies from the training set.
The only difference between the generated examples and private examples is the capitalization, as LLM tends to make the sentence grammatically correct.
Since LLMs are known to repeat words from prompts, it is not surprising that LLMs copy private in-content examples into generated prompts.

\begin{figure}[t]
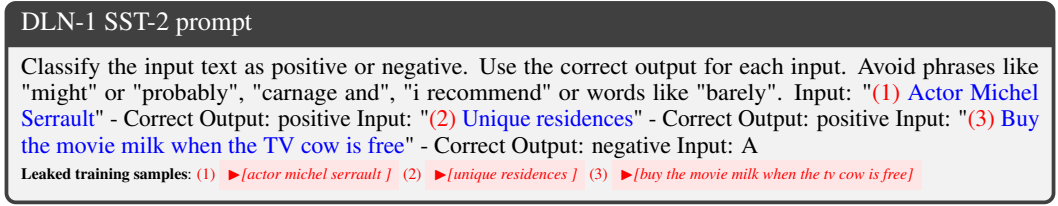

    \centering
    \begin{tcolorbox}[title=DLN-1 SST-2 prompt, fontupper=\small, left=2pt, right=2pt, bottom=1pt, top=1pt]
        Classify the input text as positive or negative. Use the correct output for each input. Avoid phrases like "might" or "probably", "carnage and", "i recommend" or words like "barely".    Input: "\textcolor{red}{(1)} \privdata{Actor Michel Serrault}"  - Correct Output: positive  Input: "\textcolor{red}{(2)} \privdata{Unique residences}" - Correct Output: positive  Input: "\textcolor{red}{(3)} \privdata{Buy the movie milk when the TV cow is free}" - Correct Output: negative  Input: A \\
        {\tiny \textbf{Leaked training samples}: 
\textcolor{red}{(1)} \leaktext{actor michel serrault } \textcolor{red}{(2)} \leaktext{unique residences } \textcolor{red}{(3)} \leaktext{buy the movie milk when the tv cow is free}}
    \end{tcolorbox}
    \vspace{-0.15in}
    \caption{A DLN-1-generated prompt is coherent but suffers from privacy leakage. We highlight the \privdata{potential leakage} in the prompt and semantically-nearest \leaktext{leaked sample} from the training set.}
    \label{fig:dln1-leakage}
    \vspace{-8pt}
\end{figure}

To defend the risk, we develop a DP variant of DLN-1, termed DP-OPT, which provides provable privacy protection through the DP noise mechanism. 
In DP-OPT, we privatize the two core operations in DLN-1: prompt generation and prompt selection, that directly access private data.

\begin{algorithm}[t]
\begin{flushleft}
\textbf{Input}: Training datasets $D = \{(x, y)\}$ and $D_{\text{val}} = \{(x, y)\}$, LLM $p_{\text{LM}}^t(\cdot)$ with generation temperature $t$, number of prompts $N$, and privacy parameters $\epsilon_0, \delta_0$.
\end{flushleft}
\begin{algorithmic}[1]
\State Initialize $\pi_0$ with a task description or empty and $\Pi \leftarrow \emptyset$
\State $D'\leftarrow \{(x, y, \hat y) | \hat y = p_{\text{LM}}^0 (y | F(x, \pi)),~ \forall (x,y) \in D  \}$ \Comment{Forward pass}
    \For{$n \in \{1, \dots, N\}$}
        \State $\pi^n \sim \operatorname{DP-EnsGen}(\pi, D', \epsilon_0, \delta_0)$  \Comment{Private Prompt Generation}
        \State $\Pi \leftarrow \Pi \cup \{\pi^n\}$
    \EndFor
    \State $\hat \pi \leftarrow \operatorname{DP-Argmax}^{\epsilon_0}_{\pi \in \Pi} \operatorname{Accuracy} (\pi; D_{\text{val}}) \cdot |D_{\text{val}}| $ \Comment{Private Prompt Selection}
\State \textbf{Output} $\hat \pi$
\end{algorithmic}
\caption{DP-OPT ($\epsilon_0 < \infty$) or OPT ($\epsilon_0=\infty$)}
\label{alg:DP-OPT-proposal}
\end{algorithm}

\begin{algorithm}[t]
\begin{flushleft}
\textbf{Input}: 
Max number of new tokens $L$, privacy parameters: $\epsilon_0, \delta_0$, subsampling rate $q$, and predicted dataset $D'$.
\end{flushleft}
\begin{algorithmic}[1]
\State Initialize output $z \leftarrow [\ ]$ and $l\leftarrow 0$
\For{$l < L$}
    \If{$\epsilon < \infty$ or $l=0$}
        \State Sample a batch $S \leftarrow \{(x, y, \hat y) \sim D'\}$ by Poisson sampling with probability $q$
        \State Partition $S'$ into disjoint subsets of equal size ($\cup_{j=1}^J S_j = S'$) 
    \EndIf
    \State $h = \operatorname{Histogram}(\{\tau_j \leftarrow p^{t}_{\text{LM}} ( \pi | B_{\pi}(S_j, \pi, z))$ for $j\in [J]\})$ \Comment{Histogram of the next token}
    \If{$\epsilon < \infty$}
        \State $\tau_l \leftarrow \operatorname{LimitedDomain}(h; \bar k=10, \epsilon_0, \delta_0)$  (\cref{alg:lim_domain}) \Comment{Select private top-1 token}
    \Else
        \State $\tau_l = \argmax_i h_i$
    \EndIf
    \State \algorithmicif\ $\tau$ is EOS\ \algorithmicthen\ Break  \Comment{Append generation}
    \If{$\tau_l = \perp$}
        \State \algorithmicif\ $l>0$ and $\tau_{l-1} = \perp$\ \algorithmicthen\ Break
    \Else
        \State $l \leftarrow l + 1$
    \EndIf
\EndFor
\State \textbf{Output} $[z_0, \cdots, z_l]$
\end{algorithmic}
\caption{\textbf{DP-EnsGen}: Differentially-Private Ensemble Generation}
\label{alg:priv_ens_gen}
\end{algorithm}

\textbf{Private Prompt Generation.} 
As demonstrated above, the main privacy leakage comes from non-private prompt proposals.
In \cref{alg:priv_ens_gen}, we develop a privatized version of the prompt generation. Specifically, we leverage the classic \emph{sample-and-aggregate} paradigm \citep{nissim2007smooth}, where we partition the full batch of data into disjoint subsets. 
We then generate each token based on the voting results formed by querying the language model with each disjoint subset. While we can simply apply the commonly used Exponential Mechanism (EM) \citep{mcsherry2007mechanism} to privately release the token with the maximum count, the naive application of EM may result in high variance and poor performance as the token space can be as large as 30,000~\cite{vicuna2023}. 
Fortunately, extending EM on large domain space has been studied in the DP community. In this work, we leverage the LimitedDomain mechanism \citep{durfee2019practical} which reduces the domain space to only those tokens with top-$\kbar$ vote counts (with some privacy budget). 
We note that $\text{LimitedDomain}$ has a small failure probability that will not output any token for the scenario where the highest vote count is not too high compared with the $\kbar$th highest vote count. 
In this case, we retry to generate using the next batch of data. 
If we run into more than one failure case for generating a single token, it means that the disjoint partitions do not have a majority agreement on a single token choice and we terminate the token generation for this prompt. 
\textbf{Private Selection among Generated Prompts.} 
With the generated prompt candidates, DLN-1 selects the best one by contradicting their performance on training samples.
This may leak private information about the validation set when some private samples significantly affect the evaluation. 
To defend against such risks, we use the exponential mechanism to select the best-generated prompt that achieves the highest count of correct predictions on the validation set in a differentially private manner. 
Formally, given a histogram $h$, we define DP-Argmax$^\epsilon$ as 
$
\Pr[ \text{DP-Argmax}^\epsilon(h) = j] \propto \exp \left(\eps h_j \right)
$. 
Note that this part protects the privacy of the validation set, which is disjoint with the training set. Hence, the privacy cost of this part does not add up to the privacy cost of prompt generation. 

We adopt the commonly used R{\'e}nyi differential privacy (RDP) \citep{mironov2017renyi} to track the privacy cost of \cref{alg:DP-OPT-proposal} for generating prompts and ensure that the total privacy cost is within a prespecified budget. 
To understand the asymptotic behavior of the number of generated tokens $m$, we provide \cref{thm:asym_analysis} in the appendix showing that the growth of $\epsilon$ is of the order of $\sqrt{m}\epsilon_0$. 
We defer the detailed privacy analysis to Appendix \ref{appendix:privacy-analysis}. 
\section{Experiments}

\textbf{Tasks.} Our study focuses on sentiment classification tasks.
We use SST-2 from the GLUE benchmark~\citep{wang2018glue} which includes $6.7\times 10^4$ samples.
Trec and Mpqa \citep{lu2021fantastically} and Disaster~\citep{bansal2019learning} are smaller datasets consisting of fewer training samples.
Both SST2 and Mpqa are sentiment classification tasks for positive or negative reviews.
Trec is to predict a $6$-option label for question types.
Disaster analyzes if the sentence is relevant to disaster.

\textbf{Setup.} We use Vicuna-7b as the local model to train prompts by default.
For DP algorithms, we follow the common practice to set the privacy budget as $\epsilon=8$ and $\delta=1/|D|$ where $|D|$ is the training size \citep{de2022unlocking, sander2023tan, duan2023flocks}. More experiment details are deferred to \cref{sec:exp_details}.

\subsection{Private Offsite Prompt Tuning}
\vspace{-0.1in}

\begin{table}[t]
    \centering
    \small
    \caption{Test accuracy (\%) with standard deviation in the brackets. 
    All trainable methods are trained on Vicuna-7b.
    \textbf{Bold methods} are model-transferable and therefore are tested on DaVinci-003.
    PromptSGD and PromptDPSGD are not transferable and, thereby are tested on Vicuna-7b.
    The \colorbox{red!10}{non-confidential} baseline uses private data in prompts.
    \colorbox{blue!10}{Confidential} and \colorbox{green!10}{confidential-and-private} prompts are trained on local Vicuna-7b.
    We highlight the \textbf{best} and the \secbest{second-best} results in each setting as bold and underlined numbers, respectively.
    }
    \label{tab:opt_test_acc}
    \begin{tabular}{*{1}{c}|*{5}{c}}
    \toprule
    Method &  SST-2 & Trec & Mpqa  & Disaster & Average \\ \midrule
     \rowcolor{red!10}  \textbf{ICL}    & $94.7 (0.4)$ &  $79.1 (0.5)$ &  $88.8 (0.1)$ & $69.0 (5.9)$ & 82.9 \\
    \rowcolor{blue!10} PromptSGD (Vicuna-7b) & $\mathbf{93.7 (1.1)}$ & $\secbest{56.1(6.7)}$ & $\mathbf{88.1 (0.8)}$ & $\mathbf{79.4 (1.2)}$ & 79.3 \\
     \rowcolor{blue!10} \textbf{DLN-1}  & $\secbest{92.7 (0.3)}$ & $70.7(3.9)$ & $81.4 (1.6)$ & $77.0 (1.9)$ & \secbest{80.4} \\
     \rowcolor{blue!10} \textbf{OPT} (ours)    & $92.4 (0.5)$ & $\mathbf{71.5 (0.8)}$ & $\secbest{85.8 (1.2)}$  & $\secbest{79.0 (0.9)}$ & \textbf{82.2} \\
    \rowcolor{green!10} PromptDPSGD (Vicuna-7b)    & $90.4 (1.7)$ &  $32.3 (3.1)$ & $84.2 (4.0)$ & $\secbest{78.5 (0.4)}$ & 70.5 \\
    \rowcolor{green!10} \textbf{0-shot}  & $\best{92.4 (0.0)}$  & $\secbest{51.8 (0.2)}$ & $\secbest{84.5(0.1)}$ & $76.4 (0.2)$ & \secbest{76.3} \\
    \rowcolor{green!10} \textbf{DP-OPT} (ours) &  $\secbest{92.2 (0.8)}$  &  $\mathbf{68.7 (6.5)}$ & $\mathbf{85.8 (0.7)}$ & $\mathbf{78.9 (0.3)}$ & \textbf{81.4} \\
    \bottomrule
    \end{tabular}
    \vspace{-12pt}
\end{table}

In \cref{tab:opt_test_acc}, we evaluate the effectiveness of DP-OPT in generating private prompts for DaVinci-003.
Our private baseline is the \emph{PromptDPSGD} which uses DPSGD~\citep{abadi2016deep} to tune soft prompts~\citep{duan2023flocks}.
We also include the non-private variant of \emph{PromptDPSGD}, \emph{PromptSGD}, for comparison.
As a non-private baseline, we follow \citep{sordoni2023deep} to include the In-Context Learning (\emph{ICL}) with 5 class-balanced demonstrations that have secondary best performance compared to DLN-1 in the sentiment classification.
To show the improvement of training, we evaluate the initial instruction (\emph{0-shot}) wrapped in the forward template.
DLN-1 serves as the state-of-the-art LLM-driven tuning method for offsite transfer.

We demonstrate that offsite prompt tuning via OPT and DP-OPT can significantly enhance prompt efficacy compared to the initial instruction (0-shot). For three tasks (SST-2, Mpqa, and Disaster), OPT and DP-OPT approach the performance of the non-private baseline, ICL. In the absence of DP, OPT boosts performance for these three tasks relative to DLN-1, likely due to the ensemble's ability to bolster model generalization.  

While automatic discrete prompts often fall short compared to soft prompts in the literature~\citep{wen2023hard}, our research indicates that using transfer prompts with DaVinci-003 can somewhat alleviate this discrepancy. When operating within the same DP budget, both OPT and DP-OPT demonstrate superior results compared to PromptDPSGD and are nearly on par with the non-private PromptSGD. On the SST-2 dataset, DP-OPT matches the accuracy of PromptSGD. For Trec, discrete prompts consistently surpass the performance of soft prompts. The underwhelming results of PromptDPSGD can be traced back to weak zero-shot performance on Trec and the noisy gradient descent. However, when the zero-shot approach excels, as seen in our other three datasets, PromptDPSGD will yield better accuracy.

\begin{table}[t]
    \centering
    \small
    \caption{Transfer test accuracy (\%) on different models with standard deviation in brackets. Trainable methods (bold) are executed on Vicuna-7b. ICL is represented as an upper bound without confidentiality.
    We highlight the $\best{best}$ and the \secbest{second-best} \emph{confidential} methods as bold and underlined numbers, respectively.
    }
    \setlength{\tabcolsep}{4pt}
    \label{tab:opt_transfer}
    \begin{tabular}{*{2}{c}|*{6}{c}}
    \toprule
    Task     & Method & Vicuna-7b       & Vicuna-33b      & Llama-2-13b    & Llama-2-70b     & DaVinci-003 & Average \\ \midrule
    \multirow{4}{*}{SST-2}     & ICL    &  $90.1 (1.5)$ &  $94.2 (0.1)$ &  $94.9 (0.4)$ &  $95.5 (0.6)$ &  $94.7 (0.4)$ & $93.9$ \\
         & \textbf{DLN-1}  &  $\best{92.8 (0.2)}$ &  $\best{92.5 (2.2)}$ &  $\best{93.5 (1.0)}$ & $\secbest{93.3 (1.8)}$ &  $\best{92.7 (0.3)}$ & $\best{93.1}$ \\
         & \textbf{OPT}    &  $\secbest{89.7 (2.7)}$ &  $91.8 (1.7)$ &  $\secbest{92.1 (0.9)}$ & $\best{94.2 (1.3)}$ &  $\secbest{92.4 (0.5)}$ & $\secbest{92.0}$ \\
         & \textbf{DP-OPT} &  $89.5 (2.6)$ &  $\secbest{92.5 (0.3)}$ &  $92.7 (1.5)$ & $93.0 (1.6)$ &  $92.2 (0.8)$ & $\secbest{92.0}$ \\
    \midrule
    \multirow{4}{*}{Trec}     & ICL    &  $49.8 (18.9)$ &  $66.9 (3.9)$ &  $77.8 (7.8)$ & $72.7(8.7)$ & $79.1 (5.1)$ & $69.2$ \\
         & \textbf{DLN-1}  &  $59.9 (5.7)$ &  $55.2 (8.2)$ &  $38.6 (0.5)$ & $\best{65.2 (7.5)}$ & $\best{70.7 (3.9)}$ & $57.9$ \\
         & \textbf{OPT}    &  $\secbest{62.1 (1.2)}$ &  $\best{72.5 (10.2)}$ &  $\secbest{53.7 (14.1)}$ & $52.3 (1.6)$ &  $\secbest{70.4 (2.7)}$ & $\secbest{62.2}$  \\
         & \textbf{DP-OPT} &  $\best{65.3 (4.3)}$ &  $\secbest{69.7 (15.1)}$ &  $\best{61.9 (2.2)}$ & $\secbest{53.6 (3.8)}$ &  $68.7 (6.5)$ & $\best{63.8}$ \\
    \midrule
    \multirow{4}{*}{Mpqa}     & ICL    &  $85.4 (2.0)$ &  $85.6 (0.9)$ &  $86.0 (0.9)$ & $87.8 (0.2)$ & $88.8 (0.1)$ & $86.7$ \\
         & \textbf{DLN-1}  &  $75.8 (6.2)$ &  $64.8 (8.7)$ &  $66.3 (10.3)$ & $78.0 (2.3)$ &  $81.4 (1.6)$ & $73.2$ \\
         & \textbf{OPT}    &  $\best{80.8 (1.2)}$ &  $\best{75.6 (1.6)}$ &  $\secbest{69.5 (9.8)}$ & $\best{84.5 (1.1)}$ &  $\best{85.8 (1.2)}$ & $\secbest{79.2}$ \\
         & \textbf{DP-OPT} &  $\secbest{80.7 (3.3)}$ &  $\secbest{67.8 (6.7)}$ &  $\best{82.8 (1.6)}$ & $\secbest{81.7 (3.1)}$ &  $\best{85.8 (0.7)}$ & $\best{79.8}$ \\
    \midrule
    \multirow{4}{*}{Disaster} & ICL    &  $58.9 (1.3)$ &  $52.0 (4.2)$ &  $61.2 (3.4)$ & $69.6 (4.5)$ &  $69.0 (5.9)$ & $62.1$ \\
             & \textbf{DLN-1}  &  $61.7 (3.2)$ &  $\best{62.4 (4.3)}$ &  $\secbest{66.3 (8.3)}$ & $\best{73.1 (1.6)}$ &  $77.0 (1.9)$ & $\best{68.1}$ \\
             & \textbf{OPT}    &  $\best{67.4 (5.1)}$ &  $\secbest{60.6 (9.1)}$ &  $57.8 (10.0)$ & $\secbest{49.1 (12.0)}$ & $\best{79.0 (0.9)}$ & $\secbest{62.8}$ \\
             & \textbf{DP-OPT} &  $\secbest{65.6 (0.3)}$ &  $53.7 (0.0)$ &  $\best{67.9 (1.5)}$ & $42.9 (0.3)$ &  $\secbest{78.9 (0.3)}$ & $61.8$ \\
    \bottomrule
    \end{tabular}
    \vspace{-15pt}
\end{table}

In \cref{tab:opt_transfer}, we assess the transferability of the prompts produced by Vicuan-7b on various larger models including Vicuna-33b, Llama-2-13b, Llama-2-70b~\citep{llama2} and DaVinci-003 (text generation version of GPT3.5)~\citep{ouyang2022training}.
The experiment yields several intriguing implications.
1) The closed-source model, DaVinci-003, exhibits greater stability in transfer compared to its open-sourced counterparts, where DP-OPT presents competitive performance compared to non-private baselines.
Such stability offers more reliable predictions in various applications and therefore encourages clients to pair DP-OPT with the closed-source DaVinci-003.
2) Without the DP noise mechanism, the ensemble method (OPT) itself enhances prompt quality relative to DLN-1 on Vicuna-33b and Llama-2-13b.
2) We observe a discrepancy in DLN-1's performance on Trec, which is considerably lower than the figures presented in \citep{sordoni2023deep}. 
It seems that Vicuna-7b struggles with the complexities of the $5$-way classification task present in the Trec dataset when engineering prompts. This limitation could be a result of architectural constraints or training nuances specific to Vicuna-7b.
\subsection{Ablation Studies}
\label{sec:ablation}

\begin{figure}[t]
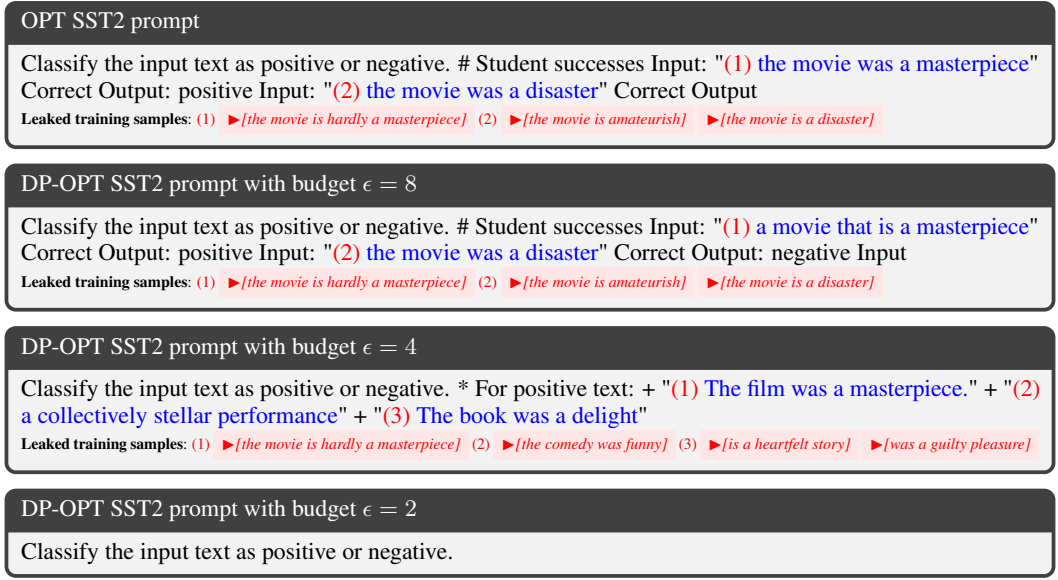

\begin{tcolorbox}[title={\small OPT SST2 prompt}, fontupper=\small, left=2pt, right=2pt, bottom=1pt, top=1pt]
  Classify the input text as positive or negative.    \# Student successes  Input: "\textcolor{red}{(1)} \privdata{the movie was a masterpiece}"  Correct Output: positive  Input: "\textcolor{red}{(2)} \privdata{the movie was a disaster}"  Correct Output \\
  {
  \tiny \textbf{Leaked training samples}: \textcolor{red}{(1)} \leaktext{the movie is hardly a masterpiece} \textcolor{red}{(2)} \leaktext{the movie is amateurish} \leaktext{the movie is a disaster}
  }
\end{tcolorbox}
\begin{tcolorbox}[title={\small DP-OPT SST2 prompt with budget $\epsilon=8$}, fontupper=\small, left=2pt, right=2pt, bottom=1pt, top=1pt]
  Classify the input text as positive or negative.    \# Student successes  Input: "\textcolor{red}{(1)} \privdata{a movie that is a masterpiece}"  Correct Output: positive   Input: "\textcolor{red}{(2)} \privdata{the movie was a disaster}" Correct Output: negative Input \\
  {
  \tiny \textbf{Leaked training samples}: \textcolor{red}{(1)} \leaktext{the movie is hardly a masterpiece} \textcolor{red}{(2)} \leaktext{the movie is amateurish} \leaktext{the movie is a disaster}
  }
\end{tcolorbox}
\begin{tcolorbox}[title={\small DP-OPT SST2 prompt with budget $\epsilon = 4$}, fontupper=\small, left=2pt, right=2pt, bottom=1pt, top=1pt]
  Classify the input text as positive or negative.      * For positive text:   + "\textcolor{red}{(1)} \privdata{The film was a masterpiece.}"   + "\textcolor{red}{(2)} \privdata{a collectively stellar performance}"   + "\textcolor{red}{(3)} \privdata{The book was a delight}" \\
  {
  \tiny \textbf{Leaked training samples}: \textcolor{red}{(1)} \leaktext{the movie is hardly a masterpiece} \textcolor{red}{(2)} \leaktext{the comedy was funny} \textcolor{red}{(3)} \leaktext{is a heartfelt story} \leaktext{was a guilty pleasure}
  }
\end{tcolorbox}
\begin{tcolorbox}[title={\small DP-OPT SST2 prompt with budget $\epsilon = 2$}, fontupper=\small, left=2pt, right=2pt, bottom=1pt, top=1pt]
  Classify the input text as positive or negative.
\end{tcolorbox}
\vspace{-7pt}
\caption{Examples of Generated Prompts. OPT/DP-OPT tends to generate pseudo examples (blue text) which do not belong to the training set.
We highlight \privdata{potentially-leaked samples} and semantically-nearest retrieved \leaktext{training samples} (there might be multiple such samples). 
More examples are in \cref{tab:generated_prompts}.
}
\label{fig:prompt_examples}
\vspace{-10pt}
\end{figure}

\textbf{Privacy-utility Trade-off.}
In order to examine the privacy-utility trade-off of DP-OPT, we conduct an ablation study with varying privacy parameters $\epsilon$ and report the test accuracy of DP-OPT on the SST-2 dataset with different test models, shown in \cref{fig:eps_acc_trade-off}. 
In the smallest model, Vicuna-7b, we observe a trade-off between accuracy and privacy: when the privacy budget ($\epsilon$) reduces to 1, the accuracy drops to $86\%$ approximately.
The accuracy drops because when the budget is limited, the LimitedDomain will prohibit DP-OPT from generating any new content.
\begin{wrapfigure}{r}{5cm}
  \vspace{-0.1in}
  \centering
  \includegraphics[width=0.9\linewidth]{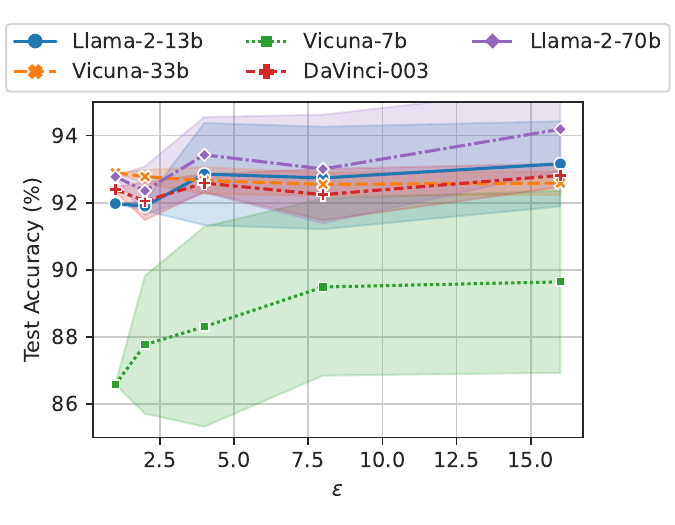}
  \vspace{-0.3in}
  \caption{Privacy-utility trade-off of DP-OPT. Smaller $\epsilon$ indicates stricter privacy protection.}
  \vspace{-0.2in}
  \label{fig:eps_acc_trade-off}
\end{wrapfigure}
Interestingly, such a trade-off is greatly mitigated when scaling up model sizes.
We notice that the LLM can still repeat the initial instructions even if under a very limited budget.
Thus, the mitigation can be attained when larger LLMs present a strong zero-shot ability.

\textbf{Examples of Privacy Leakage in Generated Prompts.}
In \cref{fig:dln1-leakage} and \cref{fig:prompt_examples}, we present examples of generated prompts that potentially leak private samples from the training dataset. 
To find such examples, we perform a semantic similarity search to retrieve the training sentence closest to each forged demonstration (details in \cref{sec:exp_details}).
In the prompts generated by DLN-1 (\cref{fig:dln1-leakage}), we observe verbatim copies of training examples. 
In contrast, OPT and DP-OPT will modify a few words when generation, potentially due to the randomness of the ensemble. In \cref{fig:prompt_examples}, 
The word ``\textit{is}'' in training example ``\textit{the movie is a disaster}'' is replaced with ``was'' in an OPT-generated prompt. 
Actually, ``the movie is'' is a common format in the training set, for example, ``the movie is amateurish''.
On the other hand, DP-OPT-generated prompts exhibit privacy-preserving behaviors due to their formal privacy guarantee. Especially, when $\epsilon$ get smaller, there are fewer word copies. 
Our observations of the generated prompts imply that verbatim copies of training examples are diminished by reducing $\epsilon$ in DP-OPT. 

\vspace{-0.1in}
\section{Discussion and Conclusion}
\vspace{-0.1in}

With the rising popularity of prompt tuning, our research endeavors to extend this tool to applications with heightened privacy concerns. We introduce the pioneering end-to-end system designed to derive differentially-private prompts from confidential training datasets and deploy these prompts on cloud models. Our approach is underpinned by theoretical validations of its privacy assurances, and through empirical analysis, we highlight the advantageous balance it strikes between utility and data privacy caused by the strong performance of scaled LLMs.

\subsubsection*{Acknowledgments}
The work of Z. Wang is in part supported by the National Science Foundation under Grant IIS-2212176.
This work is partially supported by the National Science Foundation under grant No. 1910100, No. 2046726, No. 2229876, DARPA GARD, the National Aeronautics and Space Administration (NASA) under grant No. 80NSSC20M0229, the Alfred P. Sloan Fellowship, the Michael Hammer Fellowship at MIT, and Princeton’s Gordon Y. S. Wu Fellowship. 
Portions of this research were conducted with the advanced computing resources provided by Texas A\&M High Performance Research Computing\footnote{\url{https://hprc.tamu.edu/aces/}}, a composable computing cluster~\citep{he2023performance}.
We also want to thank anonymous reviewers for their constructive suggestions and comments.

\bibliographystyle{iclr2024_conference}

\bibliography{auto_gen}  %

\appendix

\newpage

\section{Method}

\subsection{Deep Language Network}

In \cref{alg:dln1}, we present the DLN-1 algorithm from \citep{sordoni2023deep} where we highlight the potential privacy leakage.

\begin{algorithm}[ht]
\begin{flushleft}
\textbf{Input}: Client datasets $D = \{(x, y)\}$, number of prompts $N$, number of iterations $T$  \\
\end{flushleft}
\begin{algorithmic}[1]
\State Initialize $\pi$ with a task description or empty
\For{$t \in \{1, \dots, T\}$}
    \State $S \leftarrow \{(x, y) \sim D\}$ \Comment{Sample minibatch}
    \State $\hat y \leftarrow p_{\text{LM}}^0 (y | F(x, \pi))$ for all $(x,y)$ in $S$  \Comment{Foward pass}
    \State \colorbox{red!10}{$\Pi \leftarrow \{ \pi^n \sim p^{0.7}_{\text{LM}} (\pi | B_{\pi}(\{x, y, \hat y\}, \pi)) \}_{n=1}^N$}  \Comment{Sample $N$ candidate prompts} \label{step:non-private-gen}
    \State \colorbox{red!10}{$\pi \leftarrow \argmax_{\pi\in \Pi}\Ebb_{(x,y)\sim D} \log p_{\text{LM}}(y | F(x, \pi) ) $}  \Comment{Select the best prompt} \label{step:non-private-select}
\EndFor
\State \textbf{Output} $\pi$
\end{algorithmic}
\caption{Deep Language Network (DLN-1) \citep{sordoni2023deep} and \colorbox{red!10}{potential privacy leakage}.}
\label{alg:dln1}
\end{algorithm}

\subsection{Privacy Analysis}
\label{appendix:privacy-analysis}

R{\'e}nyi differential privacy (RDP) \citep{mironov2017renyi} is a variant of the standard $(\eps, \delta)$-DP that uses R{\'e}nyi-divergence as a distance metric between the output distributions of $\M(D)$ and $\M(D')$, which is particularly useful in training differentially private machine learning models. 

\begin{definition}[R{\'e}nyi Differential Privacy \citep{mironov2017renyi}]
\label{def:RDP}
We say that a mechanism $\M$ is $(\alpha, \eps_{\M}(\alpha))$-RDP with order $\alpha \in (1, \infty)$ if for every neighboring dataset $D \sim D'$, we have: 
\begin{align}
D_{\alpha}\left(\M(D) \|  \M\left(D^{\prime}\right)\right) :=\frac{1}{\alpha-1} \log \E_{o \sim \M\left(D^{\prime}\right)}\left[\left(\frac{\mu_{\M(D)}(o)}{\mu_{\M\left(D^{\prime}\right)}(o)}\right)^{\alpha}\right] \leq \eps_{\M}(\alpha)
\end{align}
where $\mu_\M(\cdot)$ denotes the density function of $\M$'s distribution. 
\end{definition}

Next, we introduce a strict relaxation of RDP which allows for a small failure probability $\delta$. It is analog to the $\delta$ term in the standard DP definition. 

\begin{definition}[Approximate RDP \citep{bun2016concentrated, zhu2022adaptive}]
\label{def:approximate-RDP}
We say a randomized algorithm $\M$ is $\delta$-approximately $(\alpha, \eps_{\M}(\alpha))$-RDP with order $\alpha \geq 1$, if for all neighboring dataset $D, D'$, there exist events $E$ (depending on $\M(D)$) and $E'$ (depending on $\M(D')$) such that $\Pr[E] \geq 1-\delta$ and $\Pr[E'] \geq 1-\delta$, and $\forall \alpha \geq 1$, we have 
\begin{align}
D_\alpha \left(\mathcal{M}(D)|E~\|~ \mathcal{M}\left(D^{\prime}\right)|E^{\prime} \right) \leq \eps_{\M}(\alpha)
\end{align}
\end{definition}

In this work, we use RDP and approximate RDP for a tighter measure of the privacy cost, as the composition for both privacy notions is trivial. After we obtain the approximate RDP guarantee for the overall mechanism, we can then convert the privacy guarantee back into the standard DP definition (see \cite{bun2016concentrated} and \cite{mironov2017renyi} for the composition and conversion formula for RDP and approximate RDP). 
In the following, we state the privacy guarantee of individual building blocks for private prompt generation and selection in terms of (approximate) RDP.

\newcommand{\EM}{\textbf{EM}}

\subsubsection{Privacy analysis for private prompt generation}

The exponential mechanism takes a utility function $q: \MS \times \cY \to \R$ and can be thought of as evaluating how good $q(D, y)$ is for an outcome $y \in \cY$ on dataset $D$. 
In our context, $y \in \cY$ is a potential token to be released and $q(D, y)$ is the vote count for the token $y$ from the disjoint partitions. 

\begin{definition}[Exponential Mechanism \citep{mcsherry2007mechanism}]
Let $\EM_q: \MS \to \cY$ be a mechanism where for all outputs $y \in \cY$ we have
$$
\Pr[\EM_q(D) = y] \propto \exp \left(\frac{\eps}{2 \Delta(q)} q(D,y) \right)
$$
where $\Delta(q)$ is the sensitivity of the quality score, i.e. for all neighboring inputs $D,D'$ we have $\sup_{y \in \cY} |q(D, y) - q(D',y)|\leq \Delta(q)$
\end{definition}

\begin{theorem}[\cite{bun2016concentrated}]
\label{thm:priv-EM}
    The exponential mechanism is $\eps$-DP, and $(\alpha, \eps_{EM}(\alpha))$-RDP s.t. $\eps_{EM}(\alpha) := \frac{\alpha}{2} \eps^2$. 
\end{theorem}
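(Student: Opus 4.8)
The plan is to treat the two claims separately: establish the pure $\eps$-DP guarantee by a direct likelihood-ratio computation (the classical McSherry--Talwar argument), and then obtain the RDP bound as an instance of the generic conversion from pure DP to zero-concentrated DP.

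For the $\eps$-DP part, I would fix neighboring datasets $D \sim D'$ and an arbitrary output $y \in \cY$, and bound the pointwise ratio
\begin{align*}
\frac{\Pr[\EM_q(D) = y]}{\Pr[\EM_q(D') = y]}
= \underbrace{\frac{\exp\!\left(\tfrac{\eps}{2\Delta(q)} q(D,y)\right)}{\exp\!\left(\tfrac{\eps}{2\Delta(q)} q(D',y)\right)}}_{(\mathrm{I})}
\cdot \underbrace{\frac{\sum_{y'} \exp\!\left(\tfrac{\eps}{2\Delta(q)} q(D',y')\right)}{\sum_{y'} \exp\!\left(\tfrac{\eps}{2\Delta(q)} q(D,y')\right)}}_{(\mathrm{II})}.
\end{align*}
Factor $(\mathrm{I})$ equals $\exp\!\big(\tfrac{\eps}{2\Delta(q)}(q(D,y)-q(D',y))\big) \le e^{\eps/2}$ by the sensitivity bound $|q(D,y)-q(D',y)| \le \Delta(q)$. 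For factor $(\mathrm{II})$, the same bound applied termwise gives $\exp(\tfrac{\eps}{2\Delta(q)} q(D',y')) \le e^{\eps/2}\exp(\tfrac{\eps}{2\Delta(q)} q(D,y'))$, so the numerator is at most $e^{\eps/2}$ times the denominator, i.e. $(\mathrm{II}) \le e^{\eps/2}$. Multiplying yields a ratio at most $e^{\eps}$, and integrating over any output set $E \subseteq \cY$ gives \cref{def:DP} with $\delta = 0$.

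For the RDP bound, I would appeal to the fact that any $\eps$-DP mechanism is $\tfrac12\eps^2$-zCDP, equivalently $(\alpha, \tfrac{\alpha}{2}\eps^2)$-RDP for every $\alpha > 1$. Concretely, fix $D \sim D'$ and introduce the privacy-loss random variable $Z = \log\!\big(\mu_{\M(D)}(o)/\mu_{\M(D')}(o)\big)$ with $o \sim \M(D)$; the $\eps$-DP guarantee just proven forces $|Z| \le \eps$ almost surely, while the normalization $\E_{o\sim\M(D)}[e^{-Z}] = 1$ holds by construction. Since $D_\alpha(\M(D)\|\M(D')) = \tfrac{1}{\alpha-1}\log \E[e^{(\alpha-1)Z}]$, the task reduces to bounding the moment generating function of a bounded, mean-constrained random variable: applying Hoeffding's lemma to $Z$ centered about its mean gives $D_\alpha \le \E[Z] + \tfrac{(\alpha-1)\eps^2}{2}$, and the remaining step is to show the Kullback--Leibler term $\E[Z] = \KL(\M(D)\|\M(D'))$ is at most $\tfrac{\eps^2}{2}$.

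The main obstacle is exactly this last bound on the mean of the privacy loss: it is not immediate from $|Z|\le\eps$ alone, and controlling it requires using the normalization constraint $\E[e^{-Z}] = 1$ together with the two-sided bound, which is the technical heart of the pure-DP-to-zCDP conversion in \citep{bun2016concentrated}. I would therefore invoke their conversion directly rather than re-derive the optimal constant, combine it with the Hoeffding estimate to obtain $D_\alpha \le \E[Z] + \tfrac{(\alpha-1)\eps^2}{2} \le \tfrac{\alpha}{2}\eps^2$, and finish by recording $\eps_{EM}(\alpha) = \tfrac{\alpha}{2}\eps^2$.
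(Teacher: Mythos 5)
Your proposal is correct and takes essentially the same route as the paper, which states \cref{thm:priv-EM} purely as a citation to \cite{bun2016concentrated} with no proof of its own: your first half is the standard McSherry--Talwar likelihood-ratio argument for $\eps$-DP, and your second half mirrors the argument inside the cited pure-DP-to-zCDP conversion (Hoeffding's lemma applied to the bounded privacy loss, plus the bound $\KL(\M(D)\,\|\,\M(D')) \le \eps^2/2$). Deferring that final KL bound to \cite{bun2016concentrated} is exactly where the paper itself defers, so nothing essential is missing from your argument.
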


We now state the privacy guarantee for LimitedDomain algorithm. 

\begin{theorem}
    Algorithm \ref{alg:lim_domain} satisfy $(\eps, \delta)$-DP and $\delta$-approximately $(\alpha, \eps_{EM}(\alpha))$-RDP.
\end{theorem}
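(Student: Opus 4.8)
The plan is to recognize \cref{alg:lim_domain} as the LimitedDomain mechanism of \citep{durfee2019practical}, which is an exponential-mechanism-style selection restricted to a \emph{data-dependent} domain — the tokens holding the top-$\kbar$ vote counts — augmented with an explicit failure output $\perp$. The first step is to pin down the sensitivity. Because the votes are formed by the sample-and-aggregate scheme over the disjoint partitions $\cup_j S_j = S$, adding or removing a single record from $D' \in \MS$ alters the membership of exactly one subset $S_j$, and hence changes only that subset's single vote. Consequently, for the quality score $q(D, y)$ equal to the vote count of token $y$, we have $\sup_{y} |q(D, y) - q(D', y)| \le 1$, so $\Delta(q) = 1$ and the \emph{unrestricted} exponential mechanism over the full token space would already be $\eps$-DP and $(\alpha, \eps_{EM}(\alpha))$-RDP by \cref{thm:priv-EM}.

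The crux, and the main obstacle, is that restricting the domain to the top-$\kbar$ tokens makes the candidate set itself depend on the data: the top-$\kbar$ lists of $D$ and $D'$ need not coincide, and a token reportable under one dataset may be absent from the other's domain, which a naive argument cannot absorb into the multiplicative $e^{\eps}$ factor. The approach, following \citep{durfee2019practical}, is to isolate this discrepancy into the additive $\delta$ slack. Concretely, I would define, for each dataset, an event ($E$ depending on $\M(D)$ and $E'$ depending on $\M(D')$) under which the released token lies in the intersection of the two top-$\kbar$ lists or equals $\perp$, so that conditioned on these events the mechanism is indistinguishable from an exponential mechanism over a common, fixed domain together with the $\perp$ threshold. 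On that common domain the sensitivity-$1$ bounds of \cref{thm:priv-EM} apply verbatim, yielding the conditional Rényi-divergence bound $D_\alpha(\M(D)\mid E \,\|\, \M(D')\mid E') \le \eps_{EM}(\alpha)$ required by \cref{def:approximate-RDP}, together with the analogous $e^{\eps}$ output ratio needed for \cref{def:DP}.

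It then remains to show $\Pr[E^c] \le \delta$ and $\Pr[(E')^c] \le \delta$. This is where the threshold parameter of LimitedDomain does the work: a token is released only when its noisy count clears a bar set relative to the $\kbar$-th largest count, so the probability that a token lying in the symmetric difference of the two top-$\kbar$ lists is selected — the only way the domain mismatch can surface — is controlled by a tail bound on the injected noise, and the mechanism's parameters are calibrated precisely so that this probability is at most $\delta$. Assembling the pieces, the high-probability events $E, E'$ and the conditional divergence bound give the $\delta$-approximate $(\alpha, \eps_{EM}(\alpha))$-RDP guarantee, while the same event decomposition yields the two-sided $(\eps, \delta)$-DP inequality on output sets. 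The delicate point is making the choice of $E, E'$ consistent across both guarantees and verifying that the $\perp$ output does not introduce additional data dependence beyond what the threshold already accounts for.
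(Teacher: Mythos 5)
Your proposal is correct and takes essentially the same route as the paper: the paper proves this theorem by directly appealing to Theorem 1 of \citep{durfee2019practical}, noting that LimitedDomain is an exponential mechanism with an extra domain-reduction step whose privacy cost is charged to the $\delta$ term. Your sketch --- sensitivity-$1$ vote counts from the disjoint partitions, exponential-mechanism analysis on the common top-$\kbar$ domain conditioned on high-probability events $E, E'$, and the threshold-calibrated tail bound that absorbs the data-dependent domain mismatch into $\delta$ --- is precisely the structure of that cited proof, just written out in detail.
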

\begin{proof}
This immediately follows from the proof of Theorem 1 from \cite{durfee2019practical}, as LimitedDomain algorithm is essentially an exponential mechanism with an extra step of reducing the domain size, whose privacy cost is being added to the $\delta$ term. 
\end{proof}

\begin{algorithm}[ht]
\begin{flushleft}
\textbf{Input}: 
Histogram $h$, top-$k$ (by default, $k=1$) from the $\bar k \in [k, d)$ limited domain, privacy parameters $\epsilon_0, \delta_0$
\end{flushleft}
\begin{algorithmic}[1]
\State Sort $h$ such that $h_{(1)} \ge h_{(2)} \ge \dots$
\State $h_\perp \leftarrow h_{\bar k + 1} + 1 + 2\ln(\min \{\Delta_0, \bar k, d - \bar k\}/\delta_0) / \epsilon_0$
\State $v_{\perp} \leftarrow h_\perp + \operatorname{Gumbel}(2\Delta_{\infty} /\epsilon_0)$
\For{$j \le \bar k$}
    \State $v_{(j)} \leftarrow h_{(j)} + \operatorname{Gumbel}(2\Delta_{\infty}/\epsilon_0)$
\EndFor
\State Sort $\{v_{(j)}\} \cup v_{\perp}$ and let $v_{i_{(1)}}, \dots, v_{i_{(j)}}, v_\perp$ be the sorted list up until $v_\perp$
\State \textbf{Output} $\{i_{(1)}, \dots, i_{(j)}, \perp\}$ if $j< k$, otherwise $\{i_{(1)}, \dots, i_{(k)}\}$
\end{algorithmic}
\caption{\textbf{LimitedDomain}$(h; k, \bar k, \epsilon_0, \delta_0)$ from \citep{durfee2019practical}, where $\Delta_0$ is the $\ell_0$ sensitivity of the utility function, and $\Delta_\infty$ is the $\ell_\infty$ sensitivity of the utility function.}
\label{alg:lim_domain}
\end{algorithm}

\paragraph{Asymptotic Privacy Analysis.}
Below, we provide an asymptotic analysis for the overall privacy bound that may facilitate the understanding of the capability of DP-OPT. 
\cref{thm:asym_analysis} shows the number of generated tokens is essentially limited by the privacy budget.

\begin{theorem} \label{thm:asym_analysis}
Suppose we set the privacy parameters of LimitedDomain as $\eps_0, \delta_0$, then the total privacy bound of our private prompt generation algorithm for generating $m$ tokens is $(\eps, m\delta_0 + \delta')$-DP with any $\delta' > 0$ and 
\begin{align*}
\eps = O( \sqrt{m \log(1/\delta')} \eps_0 )
\end{align*}
\end{theorem}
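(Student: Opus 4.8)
The plan is to account for the cost of generating all $m$ tokens entirely within the (approximate) RDP framework, where composition is additive, and to convert back to $(\epsilon,\delta)$-DP only at the very end. First I would invoke the per-token guarantee already established: each call to $\operatorname{LimitedDomain}$ with parameters $\epsilon_0,\delta_0$ is $\delta_0$-approximately $(\alpha, \frac{\alpha}{2}\epsilon_0^2)$-RDP for every order $\alpha>1$, which is exactly the LimitedDomain theorem combined with \cref{thm:priv-EM} (LimitedDomain is an exponential mechanism with the domain-reduction cost folded into the $\delta_0$ term, so $\epsilon_{EM}(\alpha)=\frac{\alpha}{2}\epsilon_0^2$). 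Since the batch feeding each token is drawn by Poisson subsampling, amplification by subsampling can only decrease the per-token cost, so using this un-amplified bound yields a valid upper bound and keeps the argument clean.

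Next I would compose across the $m$ tokens. Each token is generated from freshly subsampled data but conditioned on the previously emitted tokens $z$, so the composition is adaptive; RDP and its approximate variant both compose additively even under adaptivity (the composition rule of \cite{zhu2022adaptive}). Hence the full token-generation mechanism is $m\delta_0$-approximately $(\alpha, \frac{m\alpha}{2}\epsilon_0^2)$-RDP: the RDP curves add to $m\cdot\frac{\alpha}{2}\epsilon_0^2$, and the $m$ individual failure probabilities accumulate to $m\delta_0$.

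Then I would convert this approximate-RDP guarantee back to standard DP. Using the RDP-to-DP conversion, a $\delta_0'$-approximately $(\alpha, \epsilon_\M(\alpha))$-RDP mechanism is $\big(\epsilon_\M(\alpha) + \frac{\log(1/\delta')}{\alpha-1},\, \delta_0' + \delta'\big)$-DP for any $\delta'>0$. With $\epsilon_\M(\alpha)=\frac{m\alpha}{2}\epsilon_0^2$ and $\delta_0' = m\delta_0$ this already gives the claimed failure probability $m\delta_0 + \delta'$ and an $\epsilon$-bound of $\frac{m\alpha}{2}\epsilon_0^2 + \frac{\log(1/\delta')}{\alpha-1}$. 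The final step is to optimize over the free order $\alpha>1$: writing $a=\alpha-1$ and differentiating $\frac{m}{2}\epsilon_0^2(a+1) + \frac{\log(1/\delta')}{a}$ shows the minimizer is $a^\star = \sqrt{2\log(1/\delta')/(m\epsilon_0^2)}$, and substituting back collapses the two $a$-dependent terms to $\epsilon_0\sqrt{2m\log(1/\delta')}$ plus a residual $\frac{m}{2}\epsilon_0^2$ term that is of lower order in the relevant regime where each token consumes a small budget $\epsilon_0$. This yields $\epsilon = O(\sqrt{m\log(1/\delta')}\,\epsilon_0)$, matching the claim.

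The routine pieces here are the additive RDP composition and the closed-form $\alpha$-optimization; the one place requiring care is the approximate-RDP bookkeeping — verifying that the per-token failure events compose so that the $\delta_0$ terms add to $m\delta_0$ under adaptivity, and that the subsampling step is safely discardable for an upper bound. I expect this $\delta$-accounting, rather than the optimization, to be the main obstacle to a fully rigorous statement.
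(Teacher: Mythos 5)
Your proposal is correct, but it takes a genuinely different route from the paper. The paper's proof is a single line: it treats each call to LimitedDomain as an $(\eps_0,\delta_0)$-DP mechanism and invokes the advanced composition theorem of \cite{dwork2010boosting} directly, which yields $\eps = \sqrt{2m\log(1/\delta')}\,\eps_0 + m\eps_0(e^{\eps_0}-1)$ and failure probability $m\delta_0+\delta'$. You instead stay inside the approximate-RDP framework the paper sets up for its implementation: per-token $\delta_0$-approximate $(\alpha,\tfrac{\alpha}{2}\eps_0^2)$-RDP, additive adaptive composition to $m\delta_0$-approximate $(\alpha,\tfrac{m\alpha}{2}\eps_0^2)$-RDP, then conversion to DP with the closed-form optimization over $\alpha$, giving $\eps_0\sqrt{2m\log(1/\delta')}+\tfrac{m}{2}\eps_0^2$. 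In effect you re-derive advanced composition through Rényi accounting. What your route buys: it is self-contained given the appendix machinery (the approximate-RDP definition and the LimitedDomain RDP guarantee are already stated there), it matches the paper's own remark that the actual implementation uses an RDP accountant, it handles the $\delta$-bookkeeping and adaptivity explicitly rather than by citation, and its residual term $\tfrac{m}{2}\eps_0^2$ is slightly tighter than the $m\eps_0(e^{\eps_0}-1)$ of advanced composition. What the paper's route buys: brevity. Note that both proofs share the same implicit caveat, which you correctly flag: the stated $O(\sqrt{m\log(1/\delta')}\,\eps_0)$ bound absorbs a residual quadratic-in-$\eps_0$ term that is genuinely lower order only in the regime $\eps_0 = O(\sqrt{\log(1/\delta')/m})$, i.e., when the per-token budget is small; the theorem as written tacitly assumes this regime, so your treatment is no less rigorous than the paper's. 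Your observation that Poisson subsampling can only improve the per-token guarantee, and may therefore be discarded for an upper bound, is also sound and is implicitly what the paper does as well.
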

\begin{proof}
    This immediately follows from the advanced composition theorem of differential privacy \citep{dwork2010boosting}.
\end{proof}
We stress that \cref{thm:asym_analysis} is only used for illustrating the asymptotic growth rate of the $\eps$ with the number of tokens being generated. 
We use the RDP-based privacy accountant to numerically calculate $\eps$ in the actual implementation, which leads to a much tighter bound and therefore allows generating more tokens.

\subsubsection{Private Selection among Generated Prompts}

While our private selection algorithm is simply an exponential mechanism, we can actually obtain a tighter privacy bound than what is stated in Theorem \ref{thm:priv-EM}. We first state the definition of the monotonic utility function. 

\begin{definition}[Monotonic Utility Function]
We say that a utility function $q(\cdot, \cdot)$ is monotonic in the dataset if the addition of a data record can either increase (decrease) or remain the same with any outcome, e.g. $q(D,y) \leq q(D \cup \{x\}, y)$ for any input and outcome $y$. 
\end{definition}

Clearly, in our context, since the utility function is defined as the count of correct predictions on the validation set, the addition of a validation point will not decrease the utility function, and hence our utility function is monotonic in this case. 
We now have the following tighter privacy guarantee.

\begin{theorem}[\cite{durfee2019practical}]
    The exponential mechanism is $\eps/2$-DP, and $(\alpha, \eps_{mEM}(\alpha))$-RDP s.t. $\eps_{mEM}(\alpha) := \frac{\alpha}{8} \eps^2$ if the utility function $q$ is monotonic. 
\end{theorem}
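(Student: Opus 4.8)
The plan is to first sharpen the pure differential privacy bound from $\eps$ to $\eps/2$ by exploiting monotonicity, and then obtain the RDP statement as an immediate corollary of the standard conversion from pure DP to Rényi DP. The whole argument reuses machinery already cited in the excerpt, so almost everything reduces to one careful bookkeeping step.

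For the pure DP part, I would fix an adjacent pair and, using monotonicity, write $D' = D \cup \{x\}$ with (without loss of generality) $q(D, y) \le q(D', y) \le q(D,y) + \Delta(q)$ for every outcome $y$; the decreasing-utility and point-removal cases follow by the symmetry of adjacency. Writing $c = \eps/(2\Delta(q))$ and $Z_D = \sum_{y'} \exp(c\, q(D,y'))$, I would decompose the likelihood ratio as
\[
\frac{\Pr[\EM_q(D) = y]}{\Pr[\EM_q(D') = y]} = \exp\!\big(c\,(q(D,y) - q(D',y))\big)\cdot \frac{Z_{D'}}{Z_D}.
\]
The crux is that monotonicity makes the two factors one-sided in aligned senses: the per-outcome factor satisfies $\exp(c(q(D,y)-q(D',y))) \in [e^{-\eps/2}, 1]$, while the partition-function ratio satisfies $Z_{D'}/Z_D \in [1, e^{\eps/2}]$ (the lower bound from $q(D,\cdot) \le q(D',\cdot)$ termwise, the upper bound from $q(D',\cdot) \le q(D,\cdot) + \Delta(q)$ termwise). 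Multiplying these aligned ranges pins the ratio to $[e^{-\eps/2}, e^{\eps/2}]$, which is exactly $\eps/2$-DP. This is precisely where the generic analysis loses a factor of two: without monotonicity the per-outcome factor can itself reach $e^{\eps/2}$, so both factors contribute in the same direction and one only gets $e^{\eps}$.

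For the RDP part, I would invoke the now-established $\eps/2$-DP guarantee together with the standard fact \citep{bun2016concentrated} that any $\eps_0$-DP mechanism satisfies $(\alpha, \tfrac12 \eps_0^2\,\alpha)$-RDP for all $\alpha > 1$. Setting $\eps_0 = \eps/2$ yields $\eps_{mEM}(\alpha) = \tfrac12 (\eps/2)^2 \alpha = \tfrac{\alpha}{8}\eps^2$, as claimed; equivalently, this is the substitution $\eps \mapsto \eps/2$ into \cref{thm:priv-EM}.

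The main obstacle is the one-sided cancellation in the pure DP step: one must verify that monotonicity forces the per-outcome term and the normalization term into ranges lying on the same side of $1$, so their product lives in $[e^{-\eps/2}, e^{\eps/2}]$ rather than $[e^{-\eps}, e^{\eps}]$, and separately check that the decreasing-utility and point-removal cases reduce to the increasing-addition case by swapping the roles of $D$ and $D'$. Everything after that is a direct application of the cited pure-DP-to-RDP conversion.
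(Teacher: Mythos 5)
Your proposal is correct, but note that the paper does not actually prove this statement: it is imported verbatim from \cite{durfee2019practical} as a citation, so your argument is a self-contained reconstruction rather than a parallel of any in-paper proof. Both of your steps are sound. For the pure-DP half, decomposing the likelihood ratio into the per-outcome factor $\exp\bigl(c\,(q(D,y)-q(D',y))\bigr)$ and the normalization ratio $Z_{D'}/Z_D$, and observing that for $D'=D\cup\{x\}$ with monotone $q$ the first factor lies in $[e^{-\eps/2},1]$ while the second lies in $[1,e^{\eps/2}]$, is exactly the standard argument for the factor-of-two savings; the product is then pinned to $[e^{-\eps/2},e^{\eps/2}]$, and your reduction of the removal/decreasing cases to the addition/increasing case by symmetry of adjacency is legitimate. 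For the RDP half, the conversion of \cite{bun2016concentrated} (any $\eps_0$-DP mechanism is $(\alpha,\tfrac{1}{2}\eps_0^2\alpha)$-RDP for all $\alpha>1$) applied with $\eps_0=\eps/2$ gives $\tfrac{\alpha}{8}\eps^2$ as claimed. Two minor wording caveats: first, your remark that this is ``the substitution $\eps\mapsto\eps/2$ into \cref{thm:priv-EM}'' is loose, since that theorem concerns the mechanism with scaling $\eps/(2\Delta(q))$ and a literal substitution would change the mechanism itself; what you actually do---apply the generic pure-DP-to-RDP conversion to the already-established $\eps/2$-DP guarantee for the \emph{same} mechanism---is the right statement. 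Second, in your closing paragraph the two factors lie on \emph{opposite} sides of $1$ (one at most $1$, the other at least $1$), not the same side; that opposition is precisely what prevents their product from reaching $e^{\eps}$. Finally, for context: \cite{durfee2019practical} reach the same constants by a different route (a bounded-range analysis of the exponential mechanism), which your more elementary derivation sidesteps at no loss for this particular statement.
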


\subsection{Discussion of Computational Efficiency}

Our method is quite efficient and feasible for black-box models.
Since we do not require gradients but only the forward pass which mimics zeroth order gradients, our method is much more memory efficient for any gradient-based method, including soft prompt tuning.
The main computation bottleneck comes from the ensemble.
Here, we provide detailed memory and computation analysis of the ensemble.
\emph{1)~Computation efficiency.} Our method will do ensemble prediction per token which has a similar complexity as inference. Parallelizing the process could speed up the training.
\emph{2)~Memory efficiency of training.} As we only do inference, the complexity only depends on the context length. We use $k$ demonstrations in meta-prompts, whose complexity is close to a $k$-shot in-context learning.
\emph{3)~Memory efficiency of inference.} Compared to ICL, our generated prompts are short resulting in low overhead for memory at inference time.

\newpage

\section{Experimental Supplementaries}

\subsection{Exmperiment Details}
\label{sec:exp_details}

We present the detailed statistics of all four tasks in \cref{tab:task}.
Disaster has the smallest volume of training data.
But Trec is short on samples per class, making it a harder task.

\begin{table}[ht]
    \centering
    \small
    \caption{Task statistics. Note that different from \citep{sordoni2023deep}, we use the full set rather than a small fixed 250-sample subset of the original dataset for testing. The validation set is selected from the training set per random seed. The ratio of validation with respect to the training set is included in brackets.}
    \begin{tabular}{c|*{4}{c}p{6.8cm}}
    \toprule
    Task     & \# Train & \# Valid &  \# Test & \# Class & Description \\
    \midrule
    SST-2     & $66,674$ & $673$ (1\%) & $1,820$ & $2$ & Sentiment analysis on movie reviews \\
    Trec     & $5,452$ & $272$ (5\%) & $500$ & $6$ & Question type classification \\
    Mpqa     & $8,603$ & $430$ (5\%) & $2,000$ & $2$ & Sentiment analysis \\
    Disaster & $4,430$ & $221$ (5\%) & $1,000$ & $2$ &  Determine whether a sentence is relevant to a disaster. \\
    \bottomrule
    \end{tabular}
    \label{tab:task}
\end{table}

\textbf{Hyperparametes.} 
For DP algorithms, we limit the DP budget as $\epsilon=8.$ and $\delta=1/|D|$.
Detailed parameters for DP-OPT are given in \cref{tab:hparam_popt}.
As the total $\delta$ is determined by sample size, we mainly tune the $\epsilon_0$ and $\delta_0$ for each dataset such that enough tokens can be generated in DP-OPT.
For DP-OPT and OPT on Mpqa, we set the repetition penalty to be $1.1$ to avoid repeated words.
For DPSGD, we adopt \texttt{dp-transformers} package to reduce the memory overhead caused by gradient clipping~\citep{dp-transformers} and tune the hyper-parameters for each dataset.
We follow \citep{duan2023flocks} to use the same templates for soft prompt tuning.
For all our experiments, we report average and standard deviation from three repetitions with seeds, $\{1,2,3\}$.
For ICL, the randomness is on the selection of in-context examples.
We notice the most influential factor in ICL is the balance of examples in our experiment. We follow DLN-1 to implement the ICL where we sample 5 balanced examples. Our results are comparable to those reported in \citep{sordoni2023deep} in Trec and Mpqa. ICL results on Disaster are different because we used a much larger test set (using the standard test set of Disaster) while Sordoni \etal\ selected a small 100-sample subset for the test.
For all trainable methods, we hold out $5\%$ of training data for validation and report accuracy on the original test set.

\begin{table}[ht]
    \centering
    \small
    \caption{Hyperparamters for DP-OPT and OPT. For batch size, $5\times 205$ means $205$ $5$-demo meta-prompts. $\epsilon_0$ and $\delta_0$ are the parameters for LimitedDomain in \cref{alg:lim_domain}.}
    \begin{tabular}{c|*{5}{c}}
    \toprule
    Experiment     & Max new tokens & Batch size &  $\epsilon_0$ & $\delta_0$ & temperature \\
    \midrule
    SST-2    & $50$ & $5\times 205$ & $1.8$ & $5\times 10^{-7}$ & $1.1$ \\
    Trec     & $50$ & $3\times 102$ & $0.8$ & $4\times 10^{-6}$ & $1.1$ \\
    Mpqa     & $50$ & $3\times 102$ & $1.6$ & $5\times 10^{-6}$ & $1.1$ \\
    Disaster & $50$ & $3\times 102$ & $0.8$ & $4\times 10^{-6}$ & $1.1$ \\
    \bottomrule
    \end{tabular}
    \label{tab:hparam_popt}
\end{table}

In \cref{tab:init_instruct}, we list the initial instructions that we used in OPT, DP-OPT, and DLN-1 following \citep{sordoni2023deep}.

\begin{table}[ht]
    \centering
    \small
    \caption{Initial instructions for tasks.}
    \begin{tabular}{c|p{4cm}p{8cm}}
    \toprule
    Task   & Classes & Instruction \\
    \midrule
    SST-2  & \{negative, positive\} & Classify the input text as positive or negative. \\ \hline
    Trec  & \{description, entity, expression, human, location, number\} & Read the following question, then choose whether it is about a description, entity, expression, human, location or number. \\ \hline
    Mpqa & \{negative, positive\} & Read the following review, then choose whether it is negative or positive. \\ \hline
    Disaster & \{not relevant, relevant\} & Read the following sentence, then choose whether it is relevant to a disaster. \\
    \bottomrule
    \end{tabular}
    \label{tab:init_instruct}
\end{table}

In \cref{fig:meta-templates}, we list the templates used for DLN-1, OPT and DP-OPT.
The templates are slightly different from those in \citep{sordoni2023deep} since we use Vicuna-7b instead of GPT to handle the instructions.

\textbf{Finding privacy leakage in prompts.}
To find strings leaked from the training set, we perform a semantic similarity search to retrieve the training sentence closest to each forged demonstration. 
Concretely, we perform mean-pooling of the hidden states of the last layer of the BART-large decoder ~\citep{lewis2019bart} over the token dimension to obtain sentence embeddings for both training examples and the demonstrations generated by prompt tuning. We use Faiss \citep{johnson2019billion} to retrieve the top-5 training examples closest to the demonstrations that appeared in the generated prompts in terms of the $\ell_2$ distance between their embeddings. We then manually examine the retrieved examples to identify training example leakage.

\begin{figure}
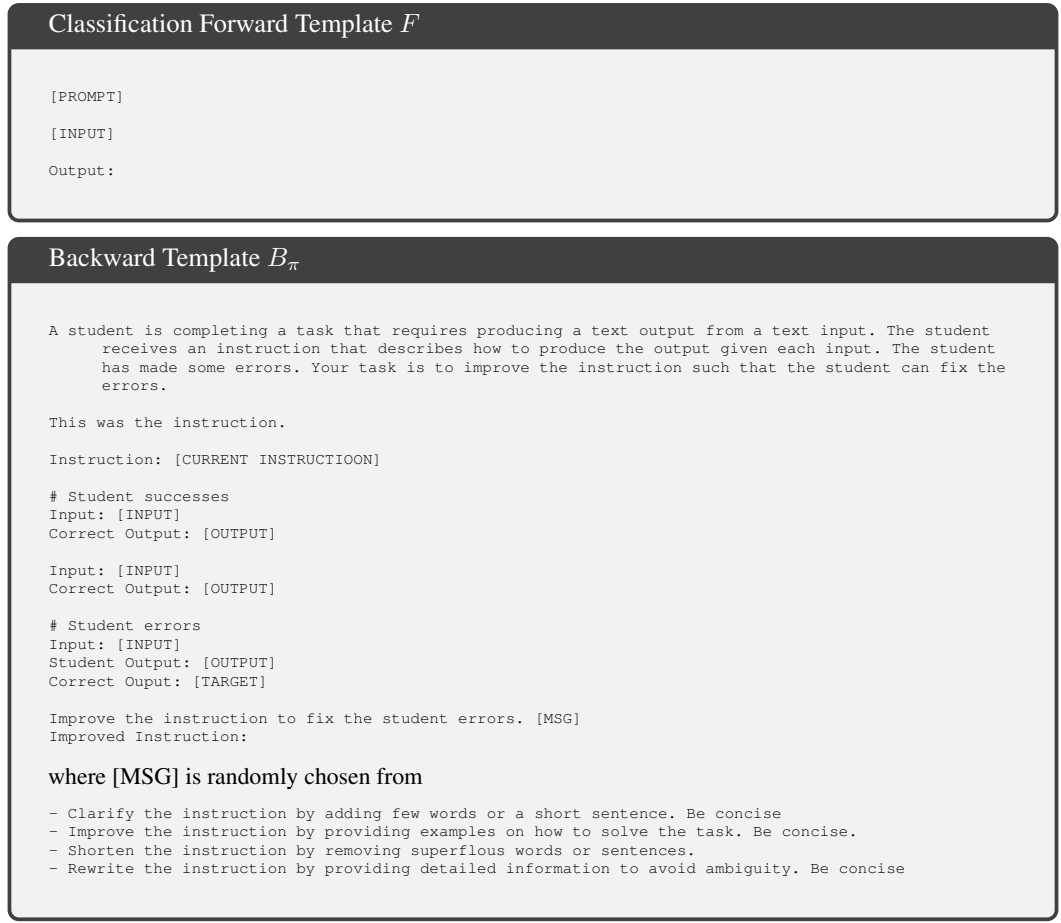

    \centering

\begin{tcolorbox}[title=Classification Forward Template $F$, fontupper=\small]
\begin{lstlisting}
[PROMPT]

[INPUT]

Output: 
\end{lstlisting}
\end{tcolorbox}

\begin{tcolorbox}[title=Backward Template $B_\pi$, fontupper=\small]
  \begin{lstlisting}
A student is completing a task that requires producing a text output from a text input. The student receives an instruction that describes how to produce the output given each input. The student has made some errors. Your task is to improve the instruction such that the student can fix the errors.
  
This was the instruction.

Instruction: [CURRENT INSTRUCTIOON]

# Student successes
Input: [INPUT]
Correct Output: [OUTPUT]

Input: [INPUT]
Correct Output: [OUTPUT]

# Student errors
Input: [INPUT]
Student Output: [OUTPUT]
Correct Ouput: [TARGET]

Improve the instruction to fix the student errors. [MSG]
Improved Instruction: 
  \end{lstlisting}
where [MSG] is randomly chosen from
\begin{lstlisting}
- Clarify the instruction by adding few words or a short sentence. Be concise
- Improve the instruction by providing examples on how to solve the task. Be concise.
- Shorten the instruction by removing superflous words or sentences.
- Rewrite the instruction by providing detailed information to avoid ambiguity. Be concise
\end{lstlisting}
\end{tcolorbox}

    \caption{Templates for DLN-1, OPT and DP-OPT.}
    \label{fig:meta-templates}
\end{figure}

\newpage

\subsection{Additional Experiments}
\label{app:ex_exp}

\begin{table}[ht]
    \centering
    \small
    \setlength{\tabcolsep}{2pt}
    \begin{tabular}{c|c|*{2}{c@{ }c}}
    \toprule
             &   \textbf{Source} &  \multicolumn{4}{c}{\textbf{Target}}  \\
      Task &   Vicuna-7b & Llama-2-70b & $\Delta$ & DaVinci-003 & $\Delta$  \\
    \midrule
    SST-22   &  $80.4 (4.0)$ &  $83.9 (1.5)$ & $3.5$ &  $73.5 $ & $-6.9$ \\
    Trec   &  $46.1 (0.8)$ &  $25.1 (3.5)$ & $-21$ &  $46.0 (1.4)$ & $-0.1$ \\
    Mpqa   &  $74.2 (5.7)$ &  $60.8 (5.6)$ & $-13.4$ &  $71.8 (7.6)$ & $-2.4$ \\
    Disaster &  $59.9 (2.0)$ &  $45.9 (4.1)$ & $-14$ & $55.4 (4.0)$ & $-4.5$ \\
    \bottomrule
    \end{tabular}
    \caption{PEZ cannot produce transferable prompts, bringing non-trivial losses ($\Delta$) on multiple tasks.
    Average accuracy (\%) on test sets is reported with standard deviation in the brackets. 
    }
    \label{tab:pez_transfer}
\end{table}

\textbf{Negative Transfer of Projected Prompt Tuning.}
In \cref{tab:pez_transfer}, we evaluate the transferability of projected prompts, dubbed PEZ, proposed by \cite{wen2023hard}.
We did not report standard deviation on DaVinci-003 and SST-2 since two prompts are not tokenizable by Davinci-003.
On both DaVinci-003 and Llama-2-70b, the PEZ-engineered prompts loses a great portion of accuracy after transfer.

\begin{figure}[ht]
    \centering
    \includegraphics[width=0.4\linewidth]{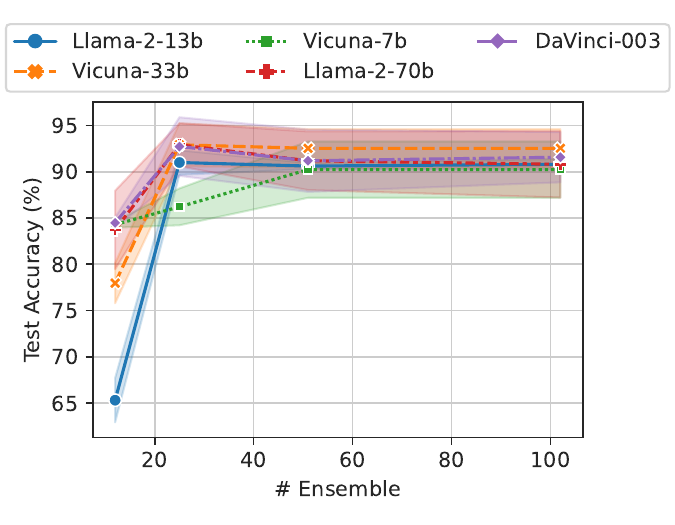}
    \caption{Ablation of the number of ensemble prompts.}
    \label{fig:ablate_ens_num}
\end{figure}

\textbf{Sensitivity to Ensemble Number.}
We ablate the number of ensemble prompts in \cref{fig:ablate_ens_num}.
For larger models, the number of ensembles seems less influential when the number is larger than 30.
Only for Vicuan-7b, the parameter is critical and it is essential to have more prompts in the ensemble.
We attribute the stability to the robustness of larger models.
Larger models could be less sensitive to words replacement.

\textbf{Empirical Evaluation of Privacy Risks.}
We evaluate the privacy risks empirically by membership inference attack (MIA) using Likelihood Ratio test (LiRA) \citep{mireshghallah2022quantifying}.
Results are reported in \cref{tab:lira}.
For SST-2, because of the distributional bias between the training and test sets, we subsample the training set to include samples with more than $20$ tokens, in which case only $15$ test samples are eliminated.
The data filtering can avoid undesired high MIA AUC due to lack of short samples in test sets.
We observe non-trivial AUCs for DLN-1 on Mpqa and SST-2. In comparison, both DP-OPT and OPT has very low AUC.
OPT has slightly higher risks than DP-OPT when applied on the Trec dataset.

\begin{table}[]
    \centering
    \begin{tabular}{*{5}{c}}
    \toprule
    Method & Disaster & Mpqa & SST2 & Trec \\
    \midrule
    DLN-1 & 0.498 & \textbf{0.772} & \textbf{0.773} & 0.446 \\
    OPT & \textbf{0.511} & 0.443 & 0.510 & 0.494 \\
    DP-OPT & 0.497 & 0.456 & 0.518 & 0.468 \\
    \bottomrule
    \end{tabular}
    \caption{Average MIA AUC of Likelihood Ratio attacks. Bold numbers indicate the highest leakage risks (over 0.5).}
    \label{tab:lira}
\end{table}

\rev{
\textbf{Comparison to Text Sanitization.}
Distinct from our work, text sanitization, for example, \cite{mattern2022limits,utpala2023locally}, sanitizes text samples before input into LLMs.
In \cref{tab:priv_data}, we compare our method to DLN-1 using sanitized data~\citep{mattern2022limits}, denoted as \emph{Private DLN-1}.
The implementation includes three steps: \textbf{(1)} First, the embedding of each token will be perturbed and projected into the original embedding space. This step can be extended to other sanitization methods \citep{utpala2023locally,feyisetan2020privacy}.
\textbf{(2)} We use DLN-1 to tune prompts on these samples. DLN-1 is selected here due to its similarity to our method but can be replaced by other prompt-tuning algorithms in practice.
\textbf{(3)} We use the generated prompts for inference.
Note that the text sanitization is measured under the metric Differential Privacy instead of standard DP.
Because of the different privacy assumptions, we emphasize that the meaning of $\epsilon$ is different.
We show that our method can outperform Private DLN-1 significantly on three datasets and has similar performance as Private DLN-1 on Disaster.
}

\begin{table}[]
    \centering
    \begin{tabular}{c|cc}
    \toprule
     Data &  Private DLN-1 ($\epsilon=8$) & DP-OPT ($\epsilon=8$) \\ \midrule
 SST-2    &   0.868 & \textbf{0.895} \\
 Trec     &   0.311 & \textbf{0.653} \\
 Mpqa     &   0.690 & \textbf{0.807} \\
 Disaster &   \textbf{0.657} & 0.656 \\
    \bottomrule
    \end{tabular}
    \caption{Comparison between DP-OPT and DLN-1 using sanitized data~\citep{mattern2022limits}. Test accuracy averaged over 3 repetitions are reported on SST-2 using Vicuna-7B.}
    \label{tab:priv_data}
\end{table}

\textbf{Generated Prompts.}
In \cref{tab:generated_prompts,tab:generated_prompts_1}, we give more examples of prompts generated by LLMs.
DLN-1 is able to generate a very semantic prompt (e.g., seed 1 in SST-2 task) but may fail to transfer (with a $3.5\%$ drop).
Consistent with the conclusions in the main content, the DLN-1 tends to leak more private information and DP-OPT presents much less visible leakage.
In the hardest task, Trec, DLN-1 extensively overfits the source model with semantically favored prompts but transfers poorly to DaVinci where two prompts suffer from negative transfer.

Interestingly, we see that ``\# Student successes Input:'' does not provide useful task information but often occurs to enjoy positive transfer, e.g., DP-OPT on SST-2 and Disaster.
We conjecture that the prompt induces the LLM to produce ``success'' output.

We notice that the prompt engineering degrades to generating dummy samples sometimes. However, our method can create prompts without samples and promote the performance, as well. For example, DP-OPT may only slightly change the prompt by appending “ \# Student successes Input:” to an initial instruction. Intuitively, the modification prompts LLMs to generate “successful” responses. We notice such minor modifications can improve the accuracy of the Disaster dataset from 76.4\% (0-shot) to 78.9\% (DP-OPT) tested by DaVinci-003. It even outperforms more complicated prompts generated by DLN-1 (77\%).

\rev{
\textbf{Avoid Privacy Leakage via Instruction.}
When we notice the direct breach of training samples in generated prompts, a straightforward fixture could be to instruct LLMs to keep secrets.
We tried two of such instructions in experiments: 
(1) \textbf{Instruction 1}: \texttt{Do not provide examples in the prompt.}
(2) \textbf{Instruction 2}: \texttt{Do not use existing samples but create dummy samples as examples in the prompt.}
For DLN-1 and (DP-)OPT, we all append the privatization instruction to the instruction in the backward template.
We test the instructions on SST-2 following the same setting in the main experiment and report the generated prompts in \cref{tab:generated_prompts_priv_instruct}.
We notice Instruction 1 still leaks private samples and the AUC measured by Likelihood Ratio MIA attack is as high as $73\%$.
Interestingly, the second instruction only generates dummy examples that have no similar examples in the training set. However, the prompt will present non-trivial risks ($69\%$ AUC) measured by Likelihood Ratio MIA attack.

In conclusion, though privatization instruction could remove private examples, it still suffers from information leakage. The method is orthogonal to our method that provides theoretical guarantees and can be combined with our DP-OPT to reduce the chance of explicit leakage. 
}

\begin{table}[ht]
    \centering
    \tiny
    \caption{Generated prompts. We present test accuracy on Vicuna-7b (Src Val Acc) and DaVinci-003 (Trg Test Acc). Samples found in training sets that best match the generated samples are marked as red text.}
    \label{tab:generated_prompts}
    \begin{tabular}{ccccp{8.5cm}}
    \toprule
    Method & Seed & Src Test Acc & Trg Test Acc & \multicolumn{1}{c}{Generated Prompts} \\ \midrule
    \rowcolor{gray!10}\multicolumn{5}{c}{\textbf{SST-2}} \\
    \textbf{DLN-1} & 1 & $92.7\%$ & $89.2\%$ & 
    1. Classify the text as positive or negative. 2. The correct output for positive text is positive. 3. The correct output for negative text is negative. 4. Do not classify texts that are a mix of positive and negative as either positive or negative. 5. Pay attention to the nuances of the words used in the text, as some words can be used in both positive and negative contexts. \privdata{Examples:  * Input: The film is a thr}
    \\ \midrule
     & 2 & $92.7\%$ & $93.5\%$ & 1. Film with a small budget   \privdata{Input: * small, personal film with emotional wallop  (\textcolor{red}{turn out a small , personal film with an emotional wallop}) Output: positive  | * Input: nightmare about bad cinema (\textcolor{red}{ nightmare about bad cinema })   Output: negative  | * Input: film with their charisma (\textcolor{red}{the film with their charisma })   Output: positive  | * Input: who feels acting is heart and soul of cinema (\textcolor{red}{who feels acting is the heart and soul of cinema })   Output: positive  | * Input: easily one of the best and most exciting movies of the year (\textcolor{red}{easily one of the best and most exciting movies of the year . })  Output: positive  | * Input:} \\ \midrule
     & 3 & $93.0\%$ & $93.0\%$ & Classify the input text as positive or negative. Use the correct output for each input. Avoid phrases like "might" or "probably", "carnage and", "i recommend" or words like "barely".    \privdata{Input: Actor Michel Serrault (\textcolor{red}{actor michel serrault }) - Correct Output: positive  Input: Unique residences (\textcolor{red}{unique residences }) - Correct Output: positive  Input: Buy the movie milk when the TV cow is free (\textcolor{red}{buy the movie milk when the tv cow is free}) - Correct Output: negative  Input: A} \\ \midrule
    \textbf{OPT} & 1 & $91.7\%$ & $92.0\%$ & Classify the input text as positive or negative.    \# Student successes  \privdata{Input: "a movie that is a movie that is a movie" (\textcolor{red}{a movie is more than a movie . }) Correct Output: negative  Input: "a movie that is a movie that is} \\ \midrule
     & 2 & $86.6\%$ & $92.3\%$ & Classify the input text as positive or negative. \\ \midrule
     & 3 & $90.7\%$ & $93.0\%$ & Classify the input text as positive or negative.    \privdata{\# Student successes  Input: "the movie was a masterpiece" (\textcolor{red}{the movie is hardly a masterpiece }) Correct Output: positive  Input: "the movie was a disaster" (\textcolor{red}{the movie is a disaster . })} Correct Output \\ \midrule
    \textbf{DP-OPT} & 1 & $90.1\%$ & $91.4\%$ & Classify the input text as positive or negative.    \# Student successes  Input: \\ \midrule
    & 2 & $85.6\%$ & $92.4\%$ & Classify the input text as positive or negative. \\ \midrule
     & 3 & $91.7\%$ & $92.9\%$ & Classify the input text as positive or negative.    \# Student successes  \privdata{Input: "a movie that is a masterpiece" (\textcolor{red}{a great movie it is not }) Correct Output: positive   Input: the movie was a disaster (\textcolor{red}{the movie is a disaster . }) Correct Output: negative}  Input \\ \midrule

    \rowcolor{gray!10}\multicolumn{5}{c}{\textbf{Trec}} \\
    \textbf{DLN-1} & 1 & $57.6\%$ & $58.0\%$ & 1. Read the question carefully. 2. Determine the correct output type based on the prompt. 3. Select the correct output category from the list of options. 4. If the question doesn't have a single correct output, provide a brief description. 5. Avoid ambiguity in the question prompt. \\ \midrule
     & 2 & $55.8\%$ & $51.6\%$ & * Read the question  * Choose the correct output from the following options:   + Description (D)   + Entity (E)   + Expression (EX)   + Human (H)   + Location (L)   + Number (N)    Correct Outputs:    * Human (H)  * Human (H)  * Human (H)  * Number (N)  * Human (H)  * \\ \midrule
     & 2 & $66.4\%$ & $58.0\%$ & 1. Read the text input carefully.    2. Identify the type of question from the options provided.    3. Choose the correct output type from the list of options.    4. Compare your choice with the correct output provided.    5. If your choice is correct, proceed to the next input. If not, go back to step 2. \\ \midrule
    \textbf{OPT} & 1 & $60.4\%$ & $70.6\%$ & * Read the following question, then choose the correct type of output: description, entity, expression, human, location, or number.  * For each question, you will be given a text input. Your task is to produce the correct \\ \midrule
     & 2 & $60.4\%$ & $72.0\%$ & * Read the following question, then choose the correct type of output: description, entity, expression, human, location, or number.  * For each question, there is only one correct output.  * The student must read the question \\ \midrule
     & 3 & $62.8\%$ & $72.0\%$ & * Read the following question, then choose the correct type of output: description, entity, expression, human, location, or number.  * For example, if the question is \privdata{"What is the capital of France?" \textcolor{red}{(What is the capital of Italy ?) (What is the capital of California ?) (What is the largest city in Germany ?)}}, the correct output \\ \midrule
    \textbf{DP-OPT} & 1 & $63.0\%$ & $72.2\%$ & * Read the following question, then choose the correct type of output: description, entity, expression, human, location, or number.  * For example, if the question is \privdata{"What is the capital of France?"}  \textcolor{red}{(What is the capital of Italy ?) (What is the capital of California ?) (What is the largest city in Germany ?)}, the correct output \\ \midrule
     & 2 & $70.2\%$ & $61.2\%$ & Read the following question, then choose the correct type of output \\ \midrule
     & 3 & $63.0\%$ & $72.4\%$ & * Read the following question, then choose the correct type of output: description, entity, expression, human, location, or number.  * For example, if the question is \privdata{"What is the capital of France?"}  \textcolor{red}{(What is the capital of Italy ?) (What is the capital of California ?) (What is the largest city in Germany ?)}, the correct output \\

    \bottomrule
    \end{tabular}
\end{table}

\begin{table}[ht]
    \centering
    \tiny
    \caption{Generated prompts. We present test accuracy on Vicuna-7b (Src Val Acc) and DaVinci-003 (Trg Test Acc). }
    \label{tab:generated_prompts_1}
    \begin{tabular}{ccccp{8.5cm}}
    \toprule
    Method & Seed & Src Test Acc & Trg Test Acc & \multicolumn{1}{c}{Generated Prompts} \\ \midrule
    \rowcolor{gray!10}\multicolumn{5}{c}{Mpqa} \\
    \textbf{DLN-1} & 1 & $82.8\%$ & $79.8\%$ & Read the following review, then choose whether it is negative or positive by identifying the correct output for each input based on the following examples:    \privdata{* Displayed unrelenting resolve and confidence (\textcolor{red}{has displayed unrelenting resolve and confidence}): positive  * Protests (\textcolor{red}{protests}): negative  * Constructive and cooperative ties (\textcolor{red}{constructive and cooperative ties}): positive  * Increasingly angry (\textcolor{red}{increasingly angry}): negative  * Positive and optimistic views: positive  * Denied (\textcolor{red}{denied}): negative  * Advanced: negative  * United States is threatening} \\ \midrule
     & 2 & $73.7\%$ & $82.9\%$ & 1. Read the following review. 2. Identify each sentence that requires the student to choose the correct output based on the given input. 3. For each identified sentence, write the correct output based on the given input. 4. Compare your output with the correct output provided in the instructions and make sure they match. 5. If the student's output is incorrect, revise it based on the correct output provided in the instructions. \\ \midrule
     & 2 & $70.9\%$ & $81.6\%$ & 1. Read the following review.  2. Choose whether it is negative or positive.  3. Correct Output: positive  4. Correct Output: positive  5. Correct Output: negative  6. Correct Output: negative  7. Correct Output: negative  8. Correct Output: negative  9. Correct Output: negative  10. Correct Output: negative  11. Correct Output: negative  12. Correct Output: negative  13. Correct Output \\ \midrule
    \textbf{OPT} & 1 & $82.3\%$ & $87.3\%$ & Read the following review, then choose whether it is negative or positive. * For each statement, determine if it is negative or positive. \\  \midrule
     & 2 & $80.1\%$ & $85.2\%$ & Read the following review, then choose whether it is negative or positive. * For each statement, determine if it is a positive or negative sentiment. \\  \midrule
     & 3 & $80.1\%$ & $85.1\%$ & Read the following review, then choose whether it is negative or positive. * For each statement, determine if it is a positive or negative sentiment. \\  \midrule
    \textbf{DP-OPT} & 1 & $84.6\%$  & $85.0\%$ & Read the following review and determine if it is positive or negative based on the words used in the text. \\ \midrule
     & 2 & $78.8\%$ & $86.3\%$ & Read the following review and determine if it is positive or negative. \\ \midrule
     & 3 & $78.8\%$ & $86.1\%$ & Read the following review and determine if it is positive or negative. \\ \midrule
    
    \rowcolor{gray!10}\multicolumn{5}{c}{Disaster} \\
    \textbf{DLN-1} & 1 & $58.9\%$ & $76.0\%$ & Read the sentence and determine if the information is relevant to a disaster. The relevant information is when the sentence mentions a disaster or its effects. Please choose "yes" if the sentence discusses a disaster or its effects, and "no" otherwise. \\ \midrule
     & 2 & $65.2\%$ & $75.8\%$ & 1. Read each sentence carefully, and ensure it relates to a disaster or not. Choose "yes" or "no" as the correct output. Produce the correct output for each sentence. \\ \midrule
     & 2 & $60.9\%$ & $79.2\%$ & 1. Read the given sentence. 2. Determine if the sentence is relevant to a disaster. 3. If the sentence is relevant to a disaster, select "yes." If not, select "no." \\ \midrule
    \textbf{OPT} & 1 & $66.8\%$ & $79.2\%$ & Read the following sentence, then choose whether it is relevant to a disaster.    \privdata{\# Student successes  Input: @syeda\_khan Wow! I'm so glad I found this!  Correct Output:} \\  \midrule
     & 2 & $47.0\%$ & $78.1\%$ & Read the following sentence, then choose whether it is relevant to a disaster.    \privdata{\# Student successes  Input: @Airbnb is a great way to make money.  Correct Output: no  Input: The world} \\  \midrule
     & 3 & $59.6\%$ & $79.8\%$ & Read the following sentence, then choose whether it is relevant to a disaster.    \privdata{\# Student successes  Input: @\#\$\%\^\&*()\_+-=[]{};':<>|~ /.? 123} \\  \midrule
    \textbf{DP-OPT} & 1 & $65.4\%$ & $78.6\%$ & Read the following sentence, then choose whether it is relevant to a disaster.    \# Student successes  Input: \\ \midrule
     & 2 & $65.4\%$ & $78.9\%$ & Read the following sentence, then choose whether it is relevant to a disaster.   \# Student successes  Input: \\ \midrule
     & 3 & $65.9\%$ & $79.2\%$ & Read the following sentence and determine whether it is relevant to a disaster.    \# Student successes  Input: \\
    \bottomrule
    \end{tabular}
\end{table}

\begin{table}[ht]
    \centering
    \tiny
    \caption{Generated prompts. We present test accuracy on Vicuna-7b (Src Val Acc) and DaVinci-003 (Trg Test Acc). }
    \label{tab:generated_prompts_priv_instruct}
    \begin{tabular}{ccccp{8.5cm}}
    \toprule
    Method & Seed & Src Test Acc & Trg Test Acc & \multicolumn{1}{c}{Generated Prompts} \\ \midrule
    \rowcolor{gray!10}\multicolumn{5}{c}{Instruction 1} \\
    \textbf{DLN-1} & 1 & $91.8\%$ & $91.8\%$ & 1. For each input, produce a corresponding correct output. 2. The input is given first, followed by the correct output. 3. The correct output is a simple and clear statement of the sentiment expressed in the input, for example "positive," "negative," "interesting," etc. 4. Label student errors in red and provide an alternate correct output in blue. 5. Provide examples in parentheses.  Examples:    \# Student successes \\ \midrule
     & 2 & $86.1\%$ & $93.9\%$ & Instruction: Classify the input text as positive or negative.   For example:    * \privdata{real-life persona} (\textcolor{red}{real-life persona}): positive  * \privdata{by a pack of dogs who are smarter than him} (\textcolor{red}{by a pack of dogs who are smarter than him}): negative  * \privdata{candid, archly funny and deeply authentic} (\textcolor{red}{candid , archly funny and deeply authentic}): positive  * \privdata{brian tufano 's handsome widescreen photography and paul grabowsky 's excellent music turn this fairly parochial melodrama into something really rather special .} (\textcolor{red}{brian tufano 's handsome widescreen photography and paul grabowsky 's excellent music turn this fairly parochial melodrama into something really rather special .}): positive \\ \midrule
     & 3 & $91.9\%$ & $93.9\%$ & 1. Classify the input text as positive or negative. 2. Correct Output: (choose one) positive, negative 3. Input: (provide the text input) 4. Correct Output: (choose one) positive, negative.     Example:    * Input: \privdata{"The movie is a masterpiece."}  Output: positive    * Input: \privdata{"This movie is terrible."}  Output: negative    * Input: \privdata{"I hated the movie} \\ \midrule
    \rowcolor{gray!10}\multicolumn{5}{c}{Instruction 2} \\
    \textbf{DLN-1} & 1 & $90.3\%$ & $93.1\%$ &  * Classify each input text as positive or negative based on the given output.    Instruction:    * Classify the input text as positive or negative.    Improved Instruction:    * Classify each input text as positive or negative based on the given output.    Instruction:    * Classify the input text as positive or negative and provide the correct output.    Instruction:    * Classify each input text as positive \\ \midrule
     & 2 & $92.2\%$ & $94.7\%$ & 1. Classify the text as positive or negative by identifying the sentiment expressed in the text. 2. Identify the main keywords in the text that indicate the sentiment. 3. For each input, provide a clear example of the correct output. For example: * Correct Output: negative    1. \privdata{"I can't stand this movie. It's so boring and poorly made."} (\textcolor{red}{so poorly plotted and scripted . })    * Correct Output: positive    2. \privdata{"I find the concept of infinity fasc}\\ \midrule
     & 3 & $91.0\%$ & $91.8\%$ & Classify the input text as positive or negative. \\
    \bottomrule
    \end{tabular}
\end{table}

\end{document}